\documentclass{article}
\usepackage{preprint,times}

\usepackage{amsmath,amsfonts,bm}

\def\eqref#1{equation~\ref{#1}}

\def\1{\bm{1}}

\DeclareMathAlphabet{\mathsfit}{\encodingdefault}{\sfdefault}{m}{sl}
\SetMathAlphabet{\mathsfit}{bold}{\encodingdefault}{\sfdefault}{bx}{n}

\usepackage{hyperref}
\usepackage{url}
\usepackage{booktabs}
\usepackage{graphicx}
\usepackage{amsmath,amssymb}
\usepackage{dsfont}
\usepackage{float}
\usepackage{longtable}   %
\usepackage{subcaption}
\usepackage{xcolor}
\usepackage{amsthm}
\usepackage{tikz}
\usetikzlibrary{calc,arrows.meta,positioning,fit,backgrounds,decorations.pathreplacing}

\definecolor{figAccent}{HTML}{0F6377}   %
\definecolor{figAlt}{HTML}{A84E26}      %
\definecolor{figMute}{HTML}{6B7680}     %
\definecolor{figFaceA}{HTML}{E4EFF2}    %
\definecolor{figFaceB}{HTML}{F2EAE4}    %
\definecolor{figRule}{HTML}{3A444C}     %

\tikzset{
  medge/.style     = {draw=figRule, line width=0.7pt, line join=round},
  vtx/.style       = {circle, fill=figRule, inner sep=0pt, minimum size=3.6pt},
  vtxopen/.style   = {circle, draw=figAlt, fill=white, line width=0.8pt,
                      inner sep=0pt, minimum size=4.4pt},
  he/.style        = {draw=figMute, line width=0.75pt, -{Stealth[length=3.6pt]}},
  heMain/.style    = {he, draw=figAccent, line width=1.15pt,
                      -{Stealth[length=4.4pt]}},
  heRel/.style     = {he, draw=figAccent, line width=0.95pt, densely dashed},
  heTwin/.style    = {he, draw=figAlt, line width=1.15pt,
                      -{Stealth[length=4.4pt]}},
  lbl/.style       = {font=\scriptsize, inner sep=1pt},
  lblAccent/.style = {lbl, text=figAccent},
  lblAlt/.style    = {lbl, text=figAlt},
  lblMute/.style   = {lbl, text=figMute},
  chord/.style     = {draw=figAlt, line width=0.9pt, densely dashed},
  panel/.style     = {font=\small\bfseries},
}

\newlength{\heoff}
\newcommand{\halfedge}[3][he]{%
  \path (#2) -- (#3) coordinate[pos=0.20] (@hs) coordinate[pos=0.80] (@he);
  \draw[#1] ($(@hs)!\heoff!90:(#3)$) -- ($(@he)!\heoff!-90:(#2)$);%
}
\newcommand{\helabel}[5][11pt]{%
  \path (#2) -- (#3) coordinate[pos=0.5] (@hm);
  \node[#4] at ($(@hm)!#1!90:(#3)$) {#5};%
}

\tikzset{
  nbox/.style  = {draw=figRule, line width=0.6pt, rounded corners=1.6pt,
                  fill=white, align=center, font=\scriptsize, inner sep=3.4pt},
  nfill/.style = {nbox, fill=figFaceA},
  gbox/.style  = {nbox, fill=figFaceB, draw=figAlt, text=figAlt},
  flow/.style  = {draw=figRule, line width=0.7pt, -{Stealth[length=4.2pt]}},
  gflow/.style = {draw=figAlt, line width=0.9pt, -{Stealth[length=4.2pt]}},
  oper/.style  = {circle, draw=figRule, line width=0.6pt, fill=white,
                  inner sep=0.8pt, font=\scriptsize},
}

\theoremstyle{plain}
\newtheorem{theorem}{Theorem}
\newtheorem{lemma}[theorem]{Lemma}

\title{Playing to Par: Reinforcement Learning for \\
       Provably Optimal Quadrilateral \\
       Block Decompositions}

\author{Arjun Narayanan \\ UC Berkeley \\ arjun.narayanan@berkeley.edu \And
Per-Olof Persson \\ UC Berkeley \\ persson@berkeley.edu}

\newcommand{\NinDist}{96}    %
\newcommand{\Nlarge}{64}     %

\begin{document}
\raggedbottom   %

\maketitle

\begin{abstract}
A quadrilateral block decomposition of a planar domain is judged by whether it
is complete, whether its elements are well shaped, and how many of its vertices
are irregular. The last has a provable floor: the discrete Gauss--Bonnet identity enforces a lower bound on the total vertex irregularity of any all-quadrilateral mesh of a given domain purely based on its topology and corner angles. We train a
reinforcement learning agent to build decompositions that reach this bound,
which we call \emph{par}. It
acts directly on the mesh's half-edge data structure through local edits, with
a policy network whose convolutions follow the mesh's own connectivity, so it
applies unchanged to domains larger than any seen in training. The reward
targets the floor directly, and it is sparse: random play reaches it on no
domain with more than eight sides. We overcome this exploration barrier via behaviour cloning on optimal meshes that are trivial to construct, walked backward into
demonstrations, before training it with PPO. On $\NinDist$ held-out domains the
agent produces an all-quadrilateral mesh on every one, a usable one on $95.7$ on
average, and a provably optimal one on $90$; Gmsh's strongest configuration at
the same element count completes $51$, is usable on $38$ and optimal on none,
and even at three to fourteen times the elements never produces a more regular
mesh. On $\Nlarge$ domains twice the training size the agent completes all, is
usable on $62$, and keeps a median excess over par below one against Gmsh's
$39$ at the same element count.
\end{abstract}

\section{Introduction}
\label{sec:intro}

\begin{figure}[t]
\centering
\includegraphics[width=0.88\textwidth]{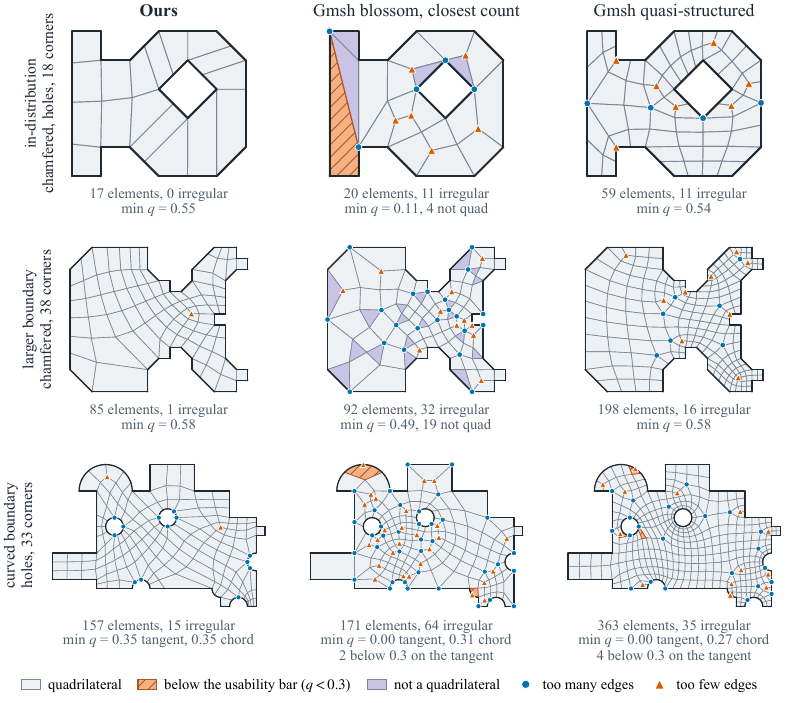}
\caption{%
Our agent and Gmsh on the same three domains. \emph{Top:} an in-distribution
domain (chamfered, with a hole, $18$ corners). \emph{Middle:} a
larger-boundary domain ($38$ corners), beyond any seen in training.
\emph{Bottom:} a curved domain with holes ($33$ corners); the agent was never trained on a
curved boundary, and quality is given against the arc tangents and on chords
(Section~\ref{sec:experiments:curved}). Gmsh
\textsc{blossom} is asked for our element count and returns the closest it can
reach; \textsc{quasi-structured} runs at its natural size. Every irregular vertex is marked. Our top mesh has
none, which the Gauss--Bonnet bound (Section~\ref{sec:problem}) certifies as
optimal; the middle one has one, against $32$ and $16$ for Gmsh, and the curved
one $15$, against $64$ and $35$. Domains are
chosen by the rule of Appendix~\ref{app:gallery}, which shows one per family.}
\label{fig:teaser}
\end{figure}

A block decomposition is a coarse all-quadrilateral partition of a domain with as \emph{regular} a vertex connectivity as the domain allows. Connectivity is crucial: a thin element can be smoothed or split, but a vertex of the wrong valence is a singularity that survives every refinement. Structured and multi-block discretisations, tensor-product bases and subdivision all rely on regular connectivity \citep{bommes2013survey, armstrong2015multiblock}; production meshers clean up valence after the fact \citep{kinney1997cleanup}, and advancing fronts introduce more singularities than a domain needs, though topology makes some unavoidable \citep{armstrong2015multiblock,
fogg2018singularities}. Block decompositions are still built largely by hand or by heuristics, and often do not target optimality.

That optimality criterion exists. The discrete Gauss--Bonnet identity gives every domain a lower bound on the
irregularity of any all-quadrilateral mesh of it, computed from the domain's
corner angles and topology alone (Section~\ref{sec:problem}). We call it \emph{par}. It is
computed once per domain before any mesh exists, and a decomposition that
reaches it is provably optimal in its connectivity. That makes block
decomposition an unusual reinforcement learning (RL) problem: the reward is sparse,
but success carries a certificate.

The sparsity is severe. We pose the problem as an MDP whose state is a
half-edge mesh and whose actions are four local operations
(Section~\ref{sec:mdp}); uniform random play reaches par on about three percent
of five- and six-sided polygons and on none with more than eight sides, and a
policy trained from scratch rarely leaves that regime. We overcome this exploration barrier by behaviour cloning on known optimal trajectories, and continuing with reinforcement learning. The policy is a convolution on the half-edge structure (Section~\ref{sec:policy}): each half-edge receives messages from its \texttt{next}, \texttt{previous} and \texttt{twin}, the pointers that define local topological connectivity. The input features and state representation are designed to allow the agent to scale to larger meshes than seen during training. Unlike prior reinforcement learning for block decomposition \citep{narayanan2024topological}, whose half-edge framework we build on and which edits an existing all-quadrilateral mesh toward ideal vertex degrees, our agent starts from the bare boundary, targets a bound computed from the domain, includes element quality in its reward, and overcomes the sparse reward by cloning. We also evaluate more comprehensively: on domains with holes, on domains twice the training size and on curved boundaries, against the production mesher Gmsh.

\paragraph{Contributions.}
(i) An action space and a policy network that operate directly on the
half-edge data structure of any unstructured polygonal tiling.
(ii) \textsc{geo2d}, a seeded generator and scoring protocol of mechanical-part outlines with chamfers, fillets, and holes, released as a benchmark suite. (iii) A training recipe for the sparse reward: a shaped reward that targets par
directly and is normalised so that returns share one scale across domain sizes; and optimal meshes that are trivial to construct, such as
polyominoes, walked backward into verified demonstrations in the agent's action
space. Cloning on them overcomes the exploration barrier that otherwise blocks
PPO.

\section{Quadrilateral block decomposition and its optimality bound}
\label{sec:problem}

A \emph{domain} $\Omega$ is a bounded planar polygonal region whose boundary
consists of finitely many disjoint simple closed polygons: one outer boundary
and, optionally, $H$ hole boundaries, whose vertices are the \emph{corners} of
$\Omega$. A \emph{quadrilateral block decomposition} of $\Omega$ is a
conforming mesh $\mathcal{M} = (V, E, F)$ covering $\Omega$ in which every face
is bounded by four edges. We write $\deg(v)$ for the number of edges incident
on $v$, $B \subseteq E$ for the boundary edges, and $\chi = |V| - |E| + |F| =
1 - H$ for the Euler characteristic, which depends on $\Omega$ alone.

In a \emph{regular} quadrilateral grid, each face presents an angle of $\pi/2$ at each corner, so the number of faces meeting at $v$ is
$k(v) = \operatorname{round}\bigl( \theta_v / (\pi/2) \bigr)$, where $\theta_v$
is the interior angle at $v$ between the incident boundary edges,
taken as $2\pi$ at an interior vertex. A cycle of $k$ faces about an interior
vertex uses $k$ edges and a fan of $k$ faces at a boundary vertex uses $k+1$,
giving the \emph{desired degree} $d(v)$ and the \emph{irregularity} $I(\mathcal{M})$ of a mesh,
\begin{equation}
  d(v) =
  \begin{cases}
    k(v), & v \in \operatorname{int}\Omega, \\
    \max\{k(v) + 1,\, 2\}, & v \in \partial\Omega,
  \end{cases}
  \qquad\qquad
  I(\mathcal{M}) = \sum_{v \in V} \bigl| \deg(v) - d(v) \bigr| ,
  \label{eq:desired}
\end{equation}
the floor at $2$ recording that a boundary vertex always carries its two
boundary edges. A decomposition with $I(\mathcal{M}) = 0$ is fully structured,
but not every domain admits one. How far from zero it must be is fixed by the
discrete Gauss--Bonnet identity for quadrilateral meshes
\citep{peng2013connectivity, peng2014exploring}. Write $g(v) = 4$ at an
interior vertex and $g(v) = 3$ at a boundary vertex, the degree a vertex would
want were no corner of $\Omega$ sited there.

For any conforming all-quadrilateral mesh of $\Omega$ the identity
$\sum_{v \in V} ( g(v) - \deg(v) ) = 4\chi$ holds (Lemma~\ref{thm:gb},
Appendix~\ref{app:par}): vertex, edge and face counts may differ arbitrarily
between two decompositions of one domain, but this signed total cannot move.
Separating the corner-dependent part of $d$ from the uniform part gives the bound.

\begin{theorem}\label{cor:par}
  Let $c(\Omega) = \sum_v \bigl( d(v) - g(v) \bigr)$, summed over the corners of
  $\Omega$. Every conforming all-quadrilateral mesh $\mathcal{M}$ of $\Omega$
  satisfies
  \begin{equation}
    I(\mathcal{M}) \;\geq\; \mathrm{par}(\Omega)
      \;:=\; \bigl| \, c(\Omega) + 4\chi \, \bigr| .
    \label{eq:par}
  \end{equation}
\end{theorem}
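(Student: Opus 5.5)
The plan is to apply the triangle inequality to the signed sum and then evaluate that sum using Lemma~\ref{thm:gb}. Since $I(\mathcal{M}) = \sum_{v} |\deg(v) - d(v)|$, we have
\begin{equation*}
  I(\mathcal{M}) \;\geq\; \Bigl| \sum_{v \in V} \bigl(\deg(v) - d(v)\bigr) \Bigr| ,
\end{equation*}
so it suffices to show that the signed sum equals $-(c(\Omega) + 4\chi)$ for every conforming all-quadrilateral mesh. I would split it as
\begin{equation*}
  \sum_{v}\bigl(\deg(v) - d(v)\bigr) \;=\; \sum_{v}\bigl(\deg(v) - g(v)\bigr) \;-\; \sum_{v}\bigl(d(v) - g(v)\bigr).
\end{equation*}
By Lemma~\ref{thm:gb}, the first term is $-4\chi$.

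The remaining work is to show that $\sum_v (d(v) - g(v))$ over all mesh vertices equals $c(\Omega)$. This means checking that $d(v) = g(v)$ at every vertex that is not a corner of $\Omega$, and that every corner of $\Omega$ is a vertex of $\mathcal{M}$.

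For an interior vertex, $\theta_v = 2\pi$, so $k(v) = 4$ and $d(v) = 4 = g(v)$. For a boundary vertex that is not a corner, it lies in the relative interior of a boundary segment, so $\theta_v = \pi$. Then $k(v) = 2$ and $d(v) = \max\{3, 2\} = 3 = g(v)$. Hence only corners contribute.

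Conversely, each corner must appear as a mesh vertex. The mesh covers $\Omega$ conformingly and its boundary edges $B$ are straight segments whose union is $\partial\Omega$, so a point where $\partial\Omega$ turns cannot lie in the interior of a boundary edge. If a corner happens to have angle exactly $\pi$, it contributes $d - g = 0$ whether or not it is a mesh vertex, so it is harmless either way.

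Combining these facts, the signed sum equals $-4\chi - c(\Omega)$, and its absolute value is $\mathrm{par}(\Omega)$, which proves \eqref{eq:par}.

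The only step needing care is the bookkeeping that identifies the set where $d \neq g$ with the corners, in particular the claim that corners are forced to be vertices. I would make this explicit from the definition of a conforming mesh covering $\Omega$, meaning its boundary edges tile $\partial\Omega$. Everything else is the triangle inequality plus the lemma.
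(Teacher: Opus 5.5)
Your proof is correct and follows the paper's argument. The paper likewise shows $d(v) = g(v)$ at every non-corner vertex (interior vertices give $4$, and boundary non-corner vertices have $\theta_v = \pi$ and give $3$), combines $\sum_v (d(v) - g(v)) = c(\Omega)$ with Lemma~\ref{thm:gb}, and finishes with the triangle inequality; your explicit check that every corner of $\Omega$ must be a mesh vertex is a step the paper leaves implicit, and is a worthwhile addition.
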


Away from the corners $d(v) = g(v)$, so the bound is the triangle inequality
applied to the identity (proof in Appendix~\ref{app:par}). Both terms of \eqref{eq:par} are
fixed before meshing begins --- $c(\Omega)$ is read off the corner angles,
$\chi$ off the topology --- so $\mathrm{par}(\Omega)$ is computed once per
domain and serves as an exact termination test: a decomposition that reaches
it is provably optimal in its connectivity, although the bound does not
guarantee that such a decomposition exists. When
$\theta_v / (\pi/2)$ is a half-integer the rounding in \eqref{eq:desired} is a
tie between two equally admissible corner treatments; such a corner carries
both, with par minimised over the choice (Appendix~\ref{app:par}).

\section{The decomposition MDP}
\label{sec:mdp}

\begin{figure}[t]
\centering
\begin{subfigure}[b]{0.47\textwidth}
\centering
\begin{tikzpicture}[scale=1.12]
  \coordinate (p00) at (0,0);      \coordinate (p10) at (1.95,0);
  \coordinate (p11) at (1.95,1.6); \coordinate (p01) at (0,1.6);
  \coordinate (p20) at (3.6,0);    \coordinate (p21) at (3.6,1.6);

  \fill[figFaceA] (p00) -- (p10) -- (p11) -- (p01) -- cycle;
  \fill[figFaceB] (p10) -- (p20) -- (p21) -- (p11) -- cycle;
  \draw[medge] (p00) -- (p10) -- (p20) -- (p21) -- (p11) -- (p01) -- cycle;
  \draw[medge] (p10) -- (p11);

  \node[lblMute] at (0.55,0.85) {$f$};
  \node[lblMute] at (3.15,0.85) {$f'$};

  \halfedge{p10}{p20}\halfedge{p20}{p21}\halfedge{p21}{p11}\halfedge{p01}{p00}

  \halfedge[heRel]{p11}{p01}
  \halfedge[heRel]{p00}{p10}
  \halfedge[heMain]{p10}{p11}
  \halfedge[heTwin]{p11}{p10}

  \helabel[13pt]{p10}{p11}{lblAccent}{$h$}
  \helabel[12pt]{p11}{p01}{lblAccent}{\textsf{next}}
  \helabel[12pt]{p00}{p10}{lblAccent}{\textsf{prev}}
  \helabel[16pt]{p11}{p10}{lblAlt}{\textsf{twin}}

  \node[lblMute, anchor=east, align=right] at (-0.56,0.80)
       {on $\partial\Omega$:\\\emph{no twin}};
  \draw[figMute!70, line width=0.45pt] (-0.54,0.80) -- (-0.15,0.80);

  \foreach \p in {p00,p10,p11,p01,p20,p21}{\node[vtx] at (\p){};}
\end{tikzpicture}
\caption{The half-edge representation}
\label{fig:dcel}
\end{subfigure}
\hfill
\begin{subfigure}[b]{0.47\textwidth}
\centering
\begin{tikzpicture}[scale=1.12]
  \foreach \i/\a in {0/240, 1/300, 2/0, 3/60, 4/120, 5/180}
    {\coordinate (v\i) at (\a:1.42);}

  \fill[figFaceA] (v0) -- (v1) -- (v2) -- (v3) -- (v4) -- (v5) -- cycle;
  \draw[medge] (v0) -- (v1) -- (v2) -- (v3) -- (v4) -- (v5) -- cycle;

  \draw[chord] (v0) -- (v2);
  \draw[chord] (v0) -- (v3);
  \draw[chord] (v0) -- (v4);
  \node[lblAlt, fill=figFaceA, inner sep=1.2pt] at ($(v0)!0.66!(v2)$) {$j{=}0$};
  \node[lblAlt, fill=figFaceA, inner sep=1.2pt] at ($(v0)!0.55!(v3)$) {$j{=}1$};
  \node[lblAlt, fill=figFaceA, inner sep=1.2pt] at ($(v0)!0.58!(v4)$) {$j{=}2$};

  \halfedge[heMain]{v0}{v1}
  \node[lblAccent] at ($(v0)!0.82!(v1)+(0,0.27)$) {$h$};

  \coordinate (mid) at ($(v0)!0.5!(v1)$);
  \node[vtxopen] at (mid) {};
  \node[lblAlt, anchor=north] at ($(mid)+(0,-0.15)$) {vertex insert};

  \foreach \i in {0,...,5}{\node[vtx] at (v\i){};}
  \node[lblAccent] at (237:1.82) {$u$};
  \foreach \a/\t in {303/{+1}, 0/{+2}, 60/{+3}, 120/{+4}, 180/{+5}}
    {\node[lblMute] at (\a:1.78) {\t};}
\end{tikzpicture}
\caption{The four moves available at $h$}
\label{fig:actions}
\end{subfigure}
\caption{%
\textbf{(a)} A mesh is stored as a doubly connected edge list: every edge carries
two oppositely oriented half-edges, and all connectivity is expressed through
\textsf{next} and \textsf{prev}, which walk a face loop counter-clockwise, and
\textsf{twin}, which crosses to the adjoining face. A half-edge on
$\partial\Omega$ has no twin, and the network is told so by a sentinel distinct
from the one marking a neighbour that merely fell outside the window.
\textbf{(b)} From a half-edge $h$ with source $u$, three actions insert a chord
to the vertex $j+2$ positions ahead in the face loop, cutting off a sub-face of
$j+3$ sides; the fourth splits $h$'s own edge with a new vertex. Chords leaving a
$2$-gon on either side are masked out.}
\label{fig:mdp}
\end{figure}

\subsection{States and observations}
\label{sec:mdp:obs}

We represent a mesh by a doubly connected edge list (DCEL): every edge carries
two oppositely oriented half-edges, and all connectivity is expressed through
three pointers per half-edge --- \texttt{next} and \texttt{previous}, which walk
a face loop counter-clockwise, and \texttt{twin}, which crosses to the adjoining
face (Figure~\ref{fig:dcel}). Holes enter through a slit joining each hole
boundary to the outer one, so a multiply connected domain is still a single
face loop. A state of the MDP is the current mesh together with the half-edge
on which the observation window is centred and the number of moves taken. An
episode begins with $\Omega$ as a single face and ends when the mesh is solved
or the move budget is exhausted.

The mesh grows during an episode, so the observation is restricted to a
fixed-size \emph{template} of $n_T$ half-edges: the centre's face loop, then a
breadth-first traversal of the \texttt{next}/\texttt{previous}/\texttt{twin}
pointers. Three integer arrays give each slot's neighbours in the template,
with two sentinel values distinguishing a neighbour that exists but fell
outside the window from one that does not exist because the half-edge lies on
$\partial\Omega$. Actions are restricted to the template; par, the scores and
the termination test are computed on the whole mesh. Each slot carries ten
features. \emph{Topological:} the desired and actual degree of the source and
of the head vertex of the half-edge, and the number of sides of its face.
\emph{Geometric:} the half-edge's length divided by the template's median, and
the interior angles at its source and head as multiples of $\pi/2$.
\emph{Flags:} whether the source vertex lies on $\partial\Omega$, and whether
it is a corner of $\Omega$. Degrees are clipped (vertices at eight, faces at
fifteen) so that a transient high-valence vertex cannot dominate the input
scale.

A local window cannot say how far the whole mesh is from par, so the
observation also carries a \emph{global vector} of seven scalars: the distance
of the current irregularity from par, the total face defect, the number of
odd-sided faces, par itself, the worst element quality, the mesh size relative
to the window, and the face count, the first five divided by the number of
faces so that each stays in its training range however large the mesh grows.
The fraction of the move budget consumed is a separate scalar. After each move
the centre stays on its face until that face is a quadrilateral, then moves to
the most irregular face in view, or, when nothing irregular is in view, to the
most irregular face or vertex in the whole mesh (Appendix~\ref{app:centre}).

\subsection{Actions}
\label{sec:mdp:actions}

For a half-edge $h$ with source $u$, three actions insert a \emph{chord} from
$u$ to the vertex $j+2$ positions ahead of it in $h$'s face loop,
$j \in \{0,1,2\}$, cutting a sub-face of $j+3$ sides off the face that $h$
bounds; a fourth splits $h$'s edge with a new vertex (Figure~\ref{fig:actions}).
A new vertex takes the generic desired degree $g(v)$ of
Section~\ref{sec:problem}, so par is unchanged by every move. Invalid actions
are masked: no chord may leave a $2$-gon, and no move may close a face of at
most four sides at a boundary vertex that wants three or more edges (a corner of
more than $135^\circ$, including every flat boundary point) when both of its
sides in that face lie on $\partial\Omega$: that element would have an angle of
$180^\circ$ or more there, and is degenerate whatever the smoother does. While our environment also supports deletions,
we disable them to avoid free move-and-undo cycles.
Between moves a few Laplace smoothing sweeps keep the observed angles meaningful; the
edit operations themselves fix only connectivity.

\subsection{Reward and termination}
\label{sec:mdp:reward}

Let $F(s) = \sum_{f} |\,\mathrm{sides}(f) - 4\,|$ be the face defect of a
state, $I(s)$ its irregularity \eqref{eq:desired}, and $q(s)$ the
minimum \emph{shape quality} over the corners of the faces that are already
quadrilaterals, where the shape quality of a corner with edge vectors $a, b$ is
$2\,(a \times b) / (|a|^2 + |b|^2)$: $1$ for a square corner, falling with the
aspect ratio of the two sides, $0$ at a flat corner and negative when the
element folds. The reward is shaped \citep{ng1999potential} by the potential
\begin{equation}
  \Phi(s) \;=\; -\Bigl( F(s) \;+\; w_v \,\bigl|\, I(s) - \mathrm{par}(\Omega) \,\bigr|
              \;+\; w_q \,\max\bigl(0,\; q^{\star} - q(s)\bigr) \Bigr),
  \label{eq:potential}
\end{equation}
with $w_v = 1$, $w_q = 8$ and a quality target $q^\star = 0.4$. The per-move
reward is the increase in potential divided by the initial defect
$D_0 = \max(-\Phi(s_0), 1)$, less a charge per move, plus a \emph{solve bonus} $b$:
\begin{equation}
  r_t \;=\; \frac{\Phi(s_{t+1}) - \Phi(s_t)}{D_0} \;-\; \frac{c}{T_{\max}}
  \;+\; b \,\mathds{1}[\,s_{t+1} \text{ solved}\,], \qquad c = 0.1,\; b = 1 ,
  \label{eq:reward}
\end{equation}
where $\mathds{1}[\cdot]$ is $1$ when its argument holds and $0$ otherwise, so the
solve bonus is paid once, on the move that reaches a solved state (defined
below). Dividing by $D_0$ makes a return the \emph{fraction} of the initial defect
recovered, so large domains do not dominate the advantages, and keeps the
exchange rate between quality and irregularity the same at every size. The
quality term sees only finished quadrilaterals, so each poorly shaped element
is charged at the move that creates it. The move budget scales with the start
state, $T_{\max} = \max(12,\, \lceil 3\,|\mathcal{H}_0| \rceil)$ for $|\mathcal{H}_0|$ half-edges,
and the per-move charge is scaled by it rather than applied as a discount: an
episode that runs to its limit pays $c$ whatever the domain, where a discount
would penalise a large domain for legitimately needing more moves. With
$\gamma = 1$ the return telescopes to
$(\Phi(s_T) - \Phi(s_0))/D_0 - cT/T_{\max} + b\,\mathds{1}[s_T \text{ solved}]$, on
one scale across the distribution, which lets one critic fit every domain.

A state is \emph{solved} when every face is a quadrilateral,
$I(s) = \mathrm{par}(\Omega)$, and the mesh admits a non-degenerate embedding:
it is passed once through an untangling smoother
\citep{escobar2003untangling, freitag1997smoothing} that moves interior
vertices and slides inserted boundary vertices along their edges, keeping the
corners of $\Omega$ fixed (Appendix~\ref{app:geo2d}), and accepted if
$q \geq 0.1$ afterwards. The same test validates certified instances
(Section~\ref{sec:training}), and the same smoother is applied to every method
before scoring (Section~\ref{sec:experiments}). The three quality thresholds
differ deliberately: $q^\star = 0.4$ gives the reward gradient across the range
the agent's meshes occupy, $0.1$ is a degeneracy test, and the usability bar of
$0.3$ used in evaluation is an engineering criterion.

\section{A policy on the half-edge structure}
\label{sec:policy}

\subsection{Convolution over \texttt{next}, \texttt{previous} and \texttt{twin}}

The observation is a set of half-edges together with the three pointers that
relate them, and the network's message passing follows those pointers. A block
reads, for every half-edge in the template, its own feature vector and those of
its \texttt{next}, \texttt{previous} and \texttt{twin}, concatenates the four,
and applies a shared linear map, layer normalisation and a LeakyReLU. A
neighbour that falls outside the window and one that does not exist because the
half-edge lies on $\partial\Omega$ are each replaced by their own learned
vector, so the network can tell the edge of its view from the edge of the
domain. Features are projected to a common width and passed through residual
stages of two blocks each; every block is one hop. No parameter is indexed by
position in the template or by mesh size, so a checkpoint applies unchanged to
a larger window or a larger mesh, and a configuration presents the same input
wherever in the mesh it occurs. An ablation in Appendix~\ref{app:ablations} compares
this network with a Transformer given the same features but not the adjacency.

\subsection{From half-edges to a masked action distribution}

The global vector of Section~\ref{sec:mdp:obs} never enters message passing.
The per-half-edge latents are pooled by mean and by maximum, concatenated with
the global vector and the progress scalar, and mapped by a two-layer MLP to a
\emph{context} vector (Figure~\ref{fig:policy} in Appendix~\ref{app:policyfig}). On the actor side each slot's
latent passes through a shared layer, the context is added to every slot
through a shared linear map, and a shared linear head emits four action logits
per slot; flattened over the template and masked
(Section~\ref{sec:mdp:actions}), these form one categorical distribution over
the action space. Because the context enters after the per-slot nonlinearity
and the head is linear, it shifts every slot's logits by the same vector: it
can re-weight the four action types but cannot favour one half-edge over
another. \emph{Where} to act rests on the convolution's local features and the
centring rule; \emph{what} to do there is what the context can shift. The
critic reads only the context, through its own two-layer MLP, since the value
of a state is a property of the whole mesh and not of the half-edge the window
happens to be centred on; the normalised return of
Section~\ref{sec:mdp:reward} is what lets one such critic fit every domain
size. The network has $96$ channels, four residual stages and about
$3.7 \times 10^5$ parameters.

\section{Training under sparse reward}
\label{sec:training}

Reaching par is a sparse event. Under the MDP of Section~\ref{sec:mdp}, uniform
random play reaches par on $3\%$ of polygons with five or six sides, $2\%$ of
those with up to eight, and on none with more than eight. An explorer that has
never reached par has never seen the only signal that distinguishes optimal
from merely finished, and PPO from a random initialisation reaches it on $7$ of
$\NinDist$ domains after a million steps (ablation in Appendix~\ref{app:ablations}).
We overcome this exploration wall via behaviour cloning on families of meshes which are optimal by construction.

\subsection{Certified instances without search}
\label{sec:training:certified}

A polyomino, a connected set of unit lattice cells, is already an at-par
all-quadrilateral mesh of its outline. Rather than search for a solution we
start from one and undo it: a \emph{backward walk} removes edges and dissolves
vertices, choosing only removals that invert a legal forward move, until a
single face remains, and the reversed record is a move-by-move solution in the
agent's own action space. Four seed families supply variety (polyominoes,
annuli, polar meshes about a hub, and \emph{pinwheel} annuli, whose hole is
rotated half a step against its rim, a shear in polar coordinates), and polyomino
seeds are deformed by an affine map and per-corner jitter within its corners'
angle bins, which leaves its solution valid; moving corners across a bin
boundary yields instances whose bound is above zero
(Appendix~\ref{app:certified}).

\subsection{Cloning, then reinforcement}
\label{sec:training:cloning}

Replaying a certified solution yields (observation, optimal move) pairs.
Because independent moves commute, a state usually has a \emph{set} of equally
optimal moves; we record the set and minimise cross-entropy against a uniform
distribution over it, together with value targets for the critic. The agent is
cloned from $9{,}000$ certified instances (about $187{,}000$ pairs) for six
epochs, reaching $93.7\%$ top-one agreement with the certified sets, with model
selection on a development set of fifteen hand-made levels of simple primitives
(Appendix~\ref{app:devset}).

PPO \citep{schulman2017ppo} then continues from the cloned weights on a mixture
of generated domains (Appendix~\ref{app:geo2d}) and certified ones: $35\%$ of
episodes start from a freshly generated certified instance, so the policy keeps
being asked to solve the problems cloning taught it and keeps meeting instances
whose bound is not zero. Since the cloned value head was fit to returns on a different scale from PPO's,
and an advantage measured against it would be dominated by that offset, we first
train the value head on a frozen policy for forty epochs, and only after that run full PPO. Training runs with
$\gamma = \lambda = 1$ and a target KL of $0.03$ for four million steps
(Appendix~\ref{app:hparams}). The network is small enough that the whole
recipe, generating the certified instances, cloning and four million PPO
steps, takes about four hours on a laptop (Apple M2, eight cores).

\section{Experiments}
\label{sec:experiments}

\subsection{Setup}
\label{sec:experiments:setup}

\paragraph{Domains.}
Training and evaluation outlines come from \textsc{geo2d}, a seeded generator of
lattice-based mechanical parts: rectangular bases with cuts, additions and
combs, optional $45^\circ$ chamfers and interior holes. A defined preset and
a seed determine a domain exactly (Appendix~\ref{app:geo2d}). All evaluation
domains are straight-sided and unseen in training. The \emph{in-distribution}
set is $96$ domains ranging in size from $8$ to $24$ boundary corners drawn from four families: rectilinear or chamfered, with or without holes.
The \emph{larger-boundary} set is $\Nlarge$ domains from the same four families
with $25$ to $50$ corners, beyond anything seen in training. Every generated outline has par zero, but the faces the agent works through on the way do not: each intermediate polygon is a sub-problem with its own bound, non-zero for $39\%$ of them in distribution and $73\%$ on the larger set (Appendix~\ref{app:parzero}).

\paragraph{What is measured.}
Every method is judged on the mesh as delivered, after one pass of the same
untangling smoother. We report, in this order: whether the mesh is
\emph{all-quadrilateral}; whether it is \emph{usable} (shape quality $q \geq 0.3$); the \emph{excess over par}, $I(\mathcal{M}) - \mathrm{par}(\Omega)$ (tables give the median over all-quadrilateral meshes);
and whether it is \emph{at par}, $I(\mathcal{M}) = \mathrm{par}(\Omega)$ with
$q \geq 0.1$, the certificate of Section~\ref{sec:problem}.

\paragraph{Evaluation-time procedure.}
The same procedure is applied to every domain. Each is attempted five times ---
one greedy rollout and four sampled --- with the move budget of
Section~\ref{sec:mdp:reward} doubled to $\lceil 6|\mathcal{H}_0| \rceil$; every
all-quadrilateral state along every attempt is untangled and scored, and the
best is kept, ranked by all-quadrilateral first, then minimum quality, then
closeness to par. If that mesh is below the quality bar, a
\emph{split-and-continue repair} locates its worst corner, makes one of the
environment's own moves there --- a vertex on the longer side of a thin corner,
a chord across a badly angled one --- and hands the state back to the policy
to finish, for up to three rounds. Every count reported is the mean of this
procedure over six rollout seeds on the in-distribution set and four on the
larger set. Across seeds, all-quadrilateral does not move and usable moves by
about one domain; at par moves by $1.3$ of \NinDist{} and $4.1$ of \Nlarge{}
(Appendix~\ref{app:tables}).

\paragraph{Baselines.}
Seven quadrilateral meshing configurations of Gmsh
\citep{geuzaine2009gmsh, gmsh2021quasistructured} are run on the same domains
and scored by the same criteria after the same smoother. Each is run at the
element count the agent used on that domain, so that no method gains by
refining its way to a better score; those that cannot produce a mesh that
coarse are reported at their natural sizes. The elements column is the mean
over a method's all-quadrilateral meshes, and $\times$ours is that mean as a
multiple of ours. We also report a
\emph{head-to-head} win rate: on each domain the agent wins if it produces an
all-quadrilateral mesh and the baseline does not, loses in the reverse case,
and otherwise wins on \emph{quality} if its minimum quality is higher and on
\emph{regularity} if its excess over par is lower (equal values
tie). This avoids comparing medians over the different subsets of domains each
method completes.

\subsection{Main result}
\label{sec:experiments:main}
\label{sec:experiments:gmsh}

The top block of Table~\ref{tab:main} gives the in-distribution result. The agent produces an
all-quadrilateral mesh on every one of the $\NinDist$ domains, a usable one on
$95.7$, and reaches the certified optimum on $90.2$; the median excess over par
is zero on every family. The Gmsh configurations that can be asked
for the agent's element count complete about half the domains, and those that
complete all of them need three to fourteen times the elements to do so, and
still carry a median excess of $7.5$ and $26$. Across all seven configurations
(Table~\ref{tab:gmsh}) the certified optimum is reached three times in $672$
attempts: a quality-driven mesher has no notion of the bound. Head to head the
agent wins on $90\%$ of decided domains against \textsc{blossom} and $90\%$
against \textsc{frontal-quad} at matched count, $67\%$ against
\textsc{quasi-structured} at three times the elements, and $37\%$ against
\textsc{blossom-full} at fourteen times, the one case in which more elements buy
Gmsh the better minimum quality more often than not. On regularity it wins on
every decided domain against all four, at any of their element counts.

\begin{table}[t]
\centering
\footnotesize\setlength{\tabcolsep}{4pt}
\caption{Top: the \NinDist{} in-distribution domains, mean over six evaluation
seeds (per family: Table~\ref{tab:perfamily}). Bottom: the \Nlarge{}
larger-boundary domains, four evaluation seeds; its last rows remove parts of the
evaluation-time procedure (cumulatively) and add attempts. Columns and win rates
as defined in Section~\ref{sec:experiments:setup}; each Gmsh configuration is
asked for our element count and returns the closest it can reach. Bold: best in
column within a block (elements: among methods complete on every domain).}
\label{tab:main}
\label{tab:transfer}
\label{tab:components}
\begin{tabular}{lrrrrrrrr}
\toprule
 & & & & & & & \multicolumn{2}{c}{our win rate} \\
 & all-quad & usable & excess & at par & elements & $\times$ours & quality & regularity \\
\midrule
\multicolumn{9}{l}{\emph{\NinDist{} in-distribution domains ($8$--$24$ corners)}} \\
Gmsh blossom & $51$ & $38$ & $9$ & $0$ & $19$ & $1.2$ & $90\%$ & $100\%$ \\
Gmsh frontal-quad & $43$ & $29$ & $8$ & $1$ & $16$ & $1.0$ & $90\%$ & $100\%$ \\
Gmsh quasi-structured & $\mathbf{96}$ & $\mathbf{96}$ & $7.5$ & $2$ & $51$ & $3.2$ & $67\%$ & $100\%$ \\
Gmsh blossom-full & $\mathbf{96}$ & $\mathbf{96}$ & $26$ & $0$ & $220$ & $14.0$ & $37\%$ & $100\%$ \\
Ours & $\mathbf{96.0}$ & $95.7$ & $\mathbf{0}$ & $\mathbf{90.2}$ & $\mathbf{16}$ & $1.0$ & & \\
\midrule
\multicolumn{9}{l}{\emph{\Nlarge{} larger-boundary domains ($25$--$50$ corners)}} \\
Gmsh blossom & $26$ & $18$ & $39$ & $0$ & $97$ & $1.2$ & $83\%$ & $100\%$ \\
Gmsh frontal-quad & $15$ & $14$ & $34$ & $0$ & $96$ & $1.2$ & $87\%$ & $100\%$ \\
Gmsh quasi-structured & $\mathbf{64}$ & $60$ & $22$ & $0$ & $232$ & $3.0$ & $23\%$ & $100\%$ \\
Gmsh blossom-full & $\mathbf{64}$ & $\mathbf{64}$ & $156$ & $0$ & $1180$ & $15.2$ & $2\%$ & $100\%$ \\
Ours & $\mathbf{64.0}$ & $62.0$ & $\mathbf{0.75}$ & $\mathbf{31.0}$ & $\mathbf{78}$ & $1.0$ & & \\
\quad no repair & $64.0$ & $54.5$ & $0.5$ & $32.8$ & $74$ & & & \\
\quad and default move budget & $58.2$ & $41.0$ & $0$ & $34.8$ & $59$ & & & \\
\quad $17$ attempts & $64.0$ & $63.5$ & $1.75$ & $26.0$ & & & & \\
\bottomrule
\end{tabular}
\end{table}

\subsection{Larger boundaries}
\label{sec:experiments:transfer}
\label{sec:experiments:components}

The bottom block of Table~\ref{tab:main} reports the \Nlarge{} domains with $25$ to $50$
corners, twice anything seen in training, under the same agent and procedure.
The agent completes every domain, clears the quality bar on $62.0$, keeps a
median excess over par below one against Gmsh's $39$ at matched count, and
reaches the certified optimum on $31.0$, where no Gmsh configuration does. Head
to head at matched count it wins on quality on $83\%$ and $87\%$ of decided
domains against \textsc{blossom} and \textsc{frontal-quad}, and on regularity
on all of them, as against every configuration at any element count.

The evaluation-time procedure changes almost nothing in distribution; on the
larger set it matters, and the last rows of Table~\ref{tab:main} remove its
components one at a time. With the default move budget a few domains end at the
move cap with faces still open; doubling it completes all $\Nlarge$ and lifts
the usable count from $41.0$ to $54.5$ at about a quarter more elements, and
tripling buys nothing further. The repair lifts it to $62.0$; one round usually
suffices. Seventeen attempts instead of five raise the usable count to $63.5$. While exact optimality drops on larger domains, with about half at par, the median excess over par stays below one, against $22$ for the most regular Gmsh configuration.

\subsection{Curved boundaries}
\label{sec:experiments:curved}

We run the released agent, unchanged and never trained on a curved boundary, on
$96$ domains of $8$--$24$ corners with fillets and semicircular notches, with and
without holes. Elements along arcs often fall below the quality bar, so we add a test-time repair search. Full details are in Appendix~\ref{app:curved}. Element quality along an arc can be judged against the arc's tangent or the chord connecting two vertices. For block decomposition, the tangent matters, since refinement will place vertices on the curved surface. However, we compute and report both metrics for the sake of comparison since it significantly affects the quality reading of Gmsh.

\begin{table}[t]
\centering
\footnotesize\setlength{\tabcolsep}{3pt}
\caption{Curved boundaries: the released agent without retraining, one evaluation
pass on $96$ domains with $8$--$24$ corners. Left: columns that do not depend on
how quality is measured, as in Table~\ref{tab:main} (at par with $q \geq 0.1$ on
the tangents; on chords \textsc{quasi-structured} reaches par on $2$). Right:
usable count and our win rate on quality, with element quality judged against
the arc tangents or chords. Ours: with the repair
search; the row below it is the five attempts and doubled budget of
Section~\ref{sec:experiments:setup} without any repair. Bold: best in column
(elements: among methods complete on every domain).}
\label{tab:curvedmain}
\begin{tabular}{lrrrrrrrrrr}
\toprule
 & & & & & & & \multicolumn{2}{c}{usable} & \multicolumn{2}{c}{quality win} \\
\cmidrule(lr){8-9} \cmidrule(lr){10-11}
 & all-quad & excess & at par & elem. & $\times$ours & reg.\ win & tangent & chord & tangent & chord \\
\midrule
Gmsh blossom & $62$ & $20$ & $0$ & $60$ & $1.1$ & $100\%$ & $7$ & $54$ & $94\%$ & $54\%$ \\
Gmsh frontal-quad & $55$ & $22$ & $0$ & $58$ & $1.1$ & $100\%$ & $2$ & $42$ & $100\%$ & $78\%$ \\
Gmsh quasi-structured & $\mathbf{96}$ & $10$ & $0$ & $83$ & $1.5$ & $100\%$ & $6$ & $64$ & $95\%$ & $67\%$ \\
Gmsh blossom-full & $93$ & $32$ & $0$ & $235$ & $4.4$ & $100\%$ & $25$ & $79$ & $74\%$ & $47\%$ \\
\midrule
Ours & $\mathbf{96}$ & $\mathbf{2}$ & $\mathbf{37}$ & $\mathbf{54}$ & $1.0$ & & $\mathbf{95}$ & $\mathbf{88}$ & & \\
\quad no repair search & $96$ & $2$ & $19$ & $52$ & & & $74$ & $80$ & & \\
\bottomrule
\end{tabular}
\end{table}

The agent delivers median excess over par of $2$ and reaches
par on $37$ of $96$ domains, while every Gmsh configuration carries an excess of
$10$ to $32$ and reaches par on at most $2$; the agent wins on regularity on every
decided domain against all four (Table~\ref{tab:curvedmain}). It is usable on
$95$ of $96$ against the tangents and $88$ on chords. On chords Gmsh clears the
bar far more often, on up to $79$ with \textsc{blossom-full} at $4.4\times$ the
elements, but at a matched element count the agent remains ahead of
\textsc{blossom} and \textsc{frontal-quad}. On $64$ larger curved domains of
$25$--$50$ corners (Appendix~\ref{app:curved}) the regularity result holds, a
median excess of $8$ against $28$ to $101$, while the agent clears the quality bar on fewer domains than Gmsh configurations that use several times its elements.

\section{Limitations and related work}
\label{sec:related}

\subsection{Limitations}
\label{sec:limitations}

\begin{itemize}\itemsep1pt \parskip0pt \topsep2pt
  \item \emph{The smoother is in the loop.} The solved test untangles the mesh
    before judging it, and a policy trained against that test relies on it; we
    score every method after the same smoother, but an agent that drew a
    well-shaped mesh without one would be preferable.
  \item \emph{Runtime.} Gmsh meshes a domain in milliseconds; our unoptimised
    agent takes under a second per in-distribution domain on a laptop, seconds
    per larger domain, and minutes on the largest holed ones.
  \item \emph{Curved boundaries need a search.} The agent meshes curved
    domains close to the bound without retraining, but clearing the quality bar
    along arcs takes a test-time repair search of up to ten minutes per domain
    (Section~\ref{sec:experiments:curved}); a policy that draws well-shaped
    elements along arcs without it remains open.
\end{itemize}

\subsection{Related work}
\label{sec:related:work}

\paragraph{Learning to mesh.}
Learned quadrilateral meshing has so far learned \emph{where to place the next
element}: \citet{pan2023sac} train a soft actor--critic to make sequential
meshing decisions, \citet{tong2023srlafm} steer an advancing front with
supervised and reinforcement learning, and, concurrently,
\citet{kalyan2026dmsh} pair geometric decomposition with all-quadrilateral
generation in a multi-agent framework. Our actions are instead edits on a
general polygonal half-edge mesh, and an episode ends at a certificate of
optimality rather than when a front is consumed. Closest in objective is
\citet{narayanan2024topological}, whose moves take one all-quadrilateral mesh
to another toward ideal vertex degrees (contrasted in Section~\ref{sec:intro}). Other learned mesh work optimises
geometry or error rather than connectivity \citep{lorsung2023meshdqn,
yang2023marlamr, foucart2023amr, thacher2025delaunay}.

\paragraph{Networks on mesh connectivity; learning backward from solutions.}
Edge and half-edge convolutions \citep{hanocka2019meshcnn,
ludwig2023halfedgecnn} analyse fixed triangle meshes; ours drives a policy on a
polygonal mesh that changes every move, with action heads on the pointers it
convolves over. Learning backward from a solved state is established when there
is one goal \citep{mcaleer2018autodidactic, agostinelli2019deepcubea,
florensa2017reverse, resnick2018backplay}; here each domain has its own
optimum, so certified optima are constructed and walked back to their boundary.

\paragraph{Classical construction and the bound.}
Paving \citep{blacker1991paving}, Q-Morph \citep{owen1999qmorph},
medial-axis decomposition \citep{tam1991medial, fogg2016medial,
sun2021multiblock} and the production meshers descended from them
\citep{geuzaine2009gmsh, gmsh2021quasistructured} decide locally against
quality heuristics; field-aligned methods \citep{bommes2009miq,
kalberer2007quadcover, jakob2015instant, ebke2013qex} optimise alignment to a
cross field. Neither targets a topological bound. The bound itself is due to \citet{peng2013connectivity}
and \citet{peng2014exploring}; we build on their work by making the bound the
termination criterion of an MDP, reached by a learned policy.

\section{Conclusion}
\label{sec:conclusion}

Quadrilateral block decomposition is an unusual reinforcement learning problem:
the space of connectivities grows combinatorially with the boundary and the
reward is sparse, yet every success carries a certificate of optimality.
Exploration almost never reaches that certificate, which stalls PPO from a
random start; cloning on optima that are trivial to construct gets past it, and
PPO on generated domains improves on what cloning taught. The resulting
decompositions are complete, usable and provably as regular as the domain
permits, where production meshers at the same coarseness essentially never
reach the bound, and remain nearly regular at twice the training scale and on
curved boundaries the agent never saw in training.

\subsection*{AI use statement}
We used generative AI tools, primarily large language model coding and writing
assistants, throughout this project, and disclose their use by task.

\emph{Tasks with required disclosure where AI was used.}
\begin{itemize}\itemsep1pt \parskip0pt \topsep2pt
  \item \emph{Implementing methods:} AI was used extensively to write and refactor
    code throughout the codebase.
  \item \emph{Proposing hypotheses:} AI proposed behaviour cloning and helped design the families of constructible meshes with certified solutions.
  \item \emph{Mathematical claims and proofs:} the optimality bound is due to
    prior work \citep{peng2013connectivity, peng2014exploring}; AI assisted with
    writing its proof in the form used here.
  \item \emph{Research methodology and experiments:} the experimental design was
    proposed by the authors. AI ran and monitored the experiments, explored
    hyperparameter settings, including the relative weights of the reward terms,
    and suggested follow-up experiments. The authors analyzed results,
    diagnosed failure modes (often requiring human intuition), and devised mitigations that overcame them.
  \item \emph{Cleaning and reformatting data:} AI was used in result collection and consolidation.
\end{itemize}

\emph{Tasks with required disclosure where AI was not used or which do not apply.}
\begin{itemize}\itemsep1pt \parskip0pt \topsep2pt
  \item Interpreting results: the authors examined the metrics and failure modes
    and drew the conclusions.
  \item Translation, qualitative or thematic data analysis, and surveys or
    interviews: not applicable.
\end{itemize}

\emph{Tasks with recommended disclosure where AI was used.}
\begin{itemize}\itemsep1pt \parskip0pt \topsep2pt
  \item Drafting and revising the text of the paper and figures over several rounds of revision.
  \item Summarising and searching the literature, and identifying relevant work.
\end{itemize}

\emph{What the authors did without AI.}
\begin{itemize}\itemsep1pt \parskip0pt \topsep2pt
  \item The MDP formulation: the environment, the action space, the input
    features, and the state and template construction.
  \item The reward formulation, including its potential-based shaping, and the
    termination condition and acceptance criteria.
  \item The convolution over the half-edge data structure and the action
    distribution.
  \item The experimental design, the interpretation of results, and the design of
    \textsc{geo2d}.
\end{itemize}

\emph{Verification.} We reviewed all AI-assisted work. Every reported number was
checked against the per-domain records it was computed from, bibliographic
entries were checked against the publishers' records, and AI-written code was
tested and its outputs spot-checked by re-running evaluations. We take
responsibility for the final content of this work, including text, claims and
artifacts produced with the aid of generative AI.

\subsection*{Reproducibility statement}
The domain generator is seeded, so every evaluation domain in
Section~\ref{sec:experiments} is reproduced by a preset and a seed; the
certified-instance generator, the environment, the network, the released
checkpoint and its configuration, and the scoring and table scripts are available
at \url{https://github.com/ArjunNarayanan/par-quad}, and
Appendix~\ref{app:hparams} lists every hyperparameter. Scoring is
deterministic given the domain seed and the rollout seed, so any table can be
regenerated exactly; the geo2d domain generator and the per-domain records behind
every table (Appendix~\ref{app:tables}) will be released alongside.

\subsection*{Ethics statement}
This work concerns geometric mesh generation for engineering analysis and
raises no ethical concerns beyond those of general-purpose numerical software.

\bibliography{refs}
\bibliographystyle{preprint}

\appendix
\raggedbottom

\section{The par convention in full}
\label{app:par}

\begin{lemma}[Discrete Gauss--Bonnet \citep{peng2013connectivity, peng2014exploring}]\label{thm:gb}
  For any conforming all-quadrilateral mesh $\mathcal{M}$ of $\Omega$,
  $\;\sum_{v \in V} \bigl( g(v) - \deg(v) \bigr) = 4\chi$.
\end{lemma}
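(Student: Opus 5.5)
Our plan is to prove the identity by pure counting, combining Euler's formula with the two standard incidence relations of an all-quadrilateral mesh. Write $V_i$ and $V_b$ for the interior and boundary vertices, $E_i$ and $B$ for the interior and boundary edges. Since $\partial\Omega$ is a disjoint union of closed polygons and the mesh is conforming, each boundary cycle has as many edges as vertices, so $|V_b| = |B|$. Every face has four sides, every interior edge borders two faces and every boundary edge borders one, so $4|F| = 2|E_i| + |B| = 2|E| - |B|$. Each edge has two endpoints, so $\sum_v \deg(v) = 2|E|$.

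Next we expand the left-hand side:
\[
\sum_{v} \bigl(g(v)-\deg(v)\bigr) = 4|V_i| + 3|V_b| - 2|E| = 4|V| - |V_b| - 2|E| .
\]
Replacing $|V_b|$ by $|B|$ and then $|B|$ by $2|E| - 4|F|$ gives $4|V| - (2|E|-4|F|) - 2|E| = 4(|V|-|E|+|F|) = 4\chi$. Finally we invoke $\chi = 1 - H$ as stated in Section~\ref{sec:problem}: the mesh is a cell decomposition of a disk with $H$ holes.

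The main obstacle is justifying the combinatorial hypotheses rather than the algebra. First, $|V_b| = |B|$ requires that each boundary loop is a simple cycle in the mesh. This holds because the boundary components are disjoint simple closed polygons and the mesh covers $\Omega$ conformingly; in particular no boundary vertex is pinched. Second, the face--edge count requires that each edge borders exactly two faces, or one on $\partial\Omega$, with no edge appearing twice in a single face loop. This is where the slit representation of Section~\ref{sec:mdp:obs} could cause confusion. We will argue on the genuine planar mesh, in which the slits are not edges of $\mathcal{M}$ once all faces are quadrilaterals. Alternatively, we note that a slit edge would count twice in $\sum_f \mathrm{sides}(f)$, exactly as an interior edge does, so the identity $4|F| = 2|E|-|B|$ still holds with $B$ the true boundary.

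Finally, we check the degenerate cases. A boundary vertex carries at least its two boundary edges, and the relation $\sum_v \deg = 2|E|$ holds regardless of degree, so nothing further is needed.
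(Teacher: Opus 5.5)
Your proposal is correct and follows the paper's proof essentially step for step: the face--edge incidence count $4|F| = 2|E| - |B|$, the boundary cycles carrying exactly $|B|$ vertices, and $\sum_v \deg(v) = 2|E|$, combined into $4|V| - |B| - 2|E| = 4(|V|-|E|+|F|) = 4\chi$. The slit discussion is unnecessary: in a finished mesh the slit survives as an ordinary interior edge with a quadrilateral on each side, so your second justification is the accurate one, not the claim that slits stop being edges. The appeal to $\chi = 1-H$ is also not needed, since the lemma is stated directly in terms of $\chi = |V|-|E|+|F|$.
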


\begin{proof}[Proof of Lemma~\ref{thm:gb}]
  Counting incidences between faces and edges, each of the $|F|$ faces
  contributes four while each interior edge lies on two faces and each boundary
  edge on one, so $4|F| = 2(|E| - |B|) + |B| = 2|E| - |B|$. The boundary is a
  disjoint union of cycles and so carries $|B|$ vertices; summing degrees counts
  each edge twice. Hence $\sum_v \deg(v) = 2|E|$ and
  $\sum_v g(v) = 4(|V| - |B|) + 3|B| = 4|V| - |B|$, and substituting
  $|B| = 2|E| - 4|F|$ into their difference gives
  $4|V| - |B| - 2|E| = 4(|V| - |E| + |F|) = 4\chi$.
\end{proof}

\begin{proof}[Proof of Theorem~\ref{cor:par}]
  A vertex of $\mathcal{M}$ that is not a corner of $\Omega$ is either interior,
  where $d(v) = g(v) = 4$, or lies in the interior of a boundary arc, where
  tangent continuity gives $\theta_v = \pi$ and so $d(v) = g(v) = 3$. Only
  corners contribute, so $\sum_{v \in V} (d(v) - g(v)) = c(\Omega)$ for every
  such mesh, and with Lemma~\ref{thm:gb},
  $\sum_v ( d(v) - \deg(v) ) = 4\chi + c(\Omega)$. The claim is then the
  triangle inequality,
  $\sum_v | d(v) - \deg(v) | \geq | \sum_v ( d(v) - \deg(v) ) |$.
\end{proof}

\paragraph{Ties.}
When $\theta / (\pi/2)$ is exactly a half-integer $k + \tfrac12$ --- a
$135^\circ$ or $225^\circ$ corner under the quadrilateral target, and every
rectilinear corner under a triangular one --- two treatments are equally
admissible: $k$ elements of angle $\theta / k$ or $k+1$ of angle
$\theta / (k+1)$ sit the same distance either side of the target. Rounding
picks one by convention, and the convention then appears in both the
irregularity and the bound. We therefore let such a corner carry the pair
$\{k+1, k+2\}$: its irregularity is the distance to the nearer member, which
keeps $I(\mathcal{M})$ separable over vertices, and par is the smallest value of
$|c + 4\chi|$ over all assignments of one member to each tie corner. Because the
two options differ by exactly one, that minimum is found by a scan over how
many tie corners are taken high rather than an enumeration, and it remains a
bound: the assignment that minimises a given mesh's irregularity has its own
$|c + 4\chi|$ below that irregularity by Theorem~\ref{cor:par}.

\section{The centring rule}
\label{app:centre}

Write the \emph{irregularity} of a face as $|\,\text{sides}-4\,|$ and the
\emph{defect} of a vertex as $|\deg(v) - d(v)|$, to the nearer option at a tie
corner. In the initial state the centre is the half-edge whose source vertex
has the largest defect on the most irregular face. After each move: if the
centre's own face is not yet a quadrilateral, the centre stays where it is and
only the template around it is rebuilt. Otherwise, if any face meeting the
current window is still irregular, the centre moves to one of maximal
irregularity among those and, within it, to the half-edge whose source vertex
has the largest defect. Only when nothing irregular remains in view does the
rule consult the whole mesh, moving to a face of maximal irregularity and its
worst vertex; if every face is quadrilateral, the centre moves to the worst vertex
anywhere.

\section{Certified instances}
\label{app:certified}

A polyomino is an at-par mesh of its outline because every interior vertex has
degree four and every boundary vertex the degree its $90^\circ$, $180^\circ$ or
$270^\circ$ corner asks for. The four seed families contribute different
structure. Polyominoes give rectilinear domains at par zero. Annuli give
domains with a hole; because the backward walk leaves the connecting slit in
place, they arrive in the same single-face representation the agent meets at
test time. Polar meshes of $m$ quadrilaterals about a single hub give par
$|4 - m|$.

Polyomino seeds are then deformed by an affine map plus per-corner jitter,
taken as far as the angle bins allow; seeds with a hole and polar meshes are
only mirrored (and rotated and scaled, which the normalised features do not
see), so their variety comes from the construction itself. Holes in annuli are rotated relative to the outer boundary to create pinwheels. A corner's desired degree depends only on which bin
its interior angle falls in, so the known solution stays valid on the skewed,
irregular polygon. Redrawing an outline from prescribed corner angles is how
instances with par above zero are produced: par is the total rounding error of
the corner angles (Section~\ref{sec:problem}), so moving $k$ corners
consistently across a bin boundary yields an instance of par $k$. About a third
of the instances are of this kind. Every candidate is replayed through the
environment and checked against the solved test of
Section~\ref{sec:mdp:reward} before it enters the dataset.
Figure~\ref{fig:certified} shows one instance of each family.

\begin{figure}[H]
\centering
\includegraphics[width=\textwidth]{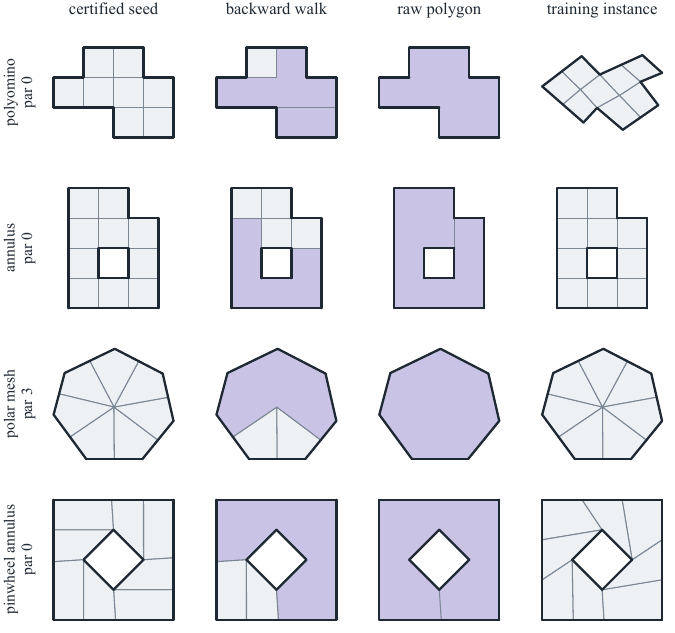}
\caption{Certified instances, one per family, each the first accepted at a
fixed seed. Columns: the constructed optimal mesh; the backward walk a third
of the way through; the single face it ends on, which is the start state; and
the training instance, the transformed outline with the recorded moves replayed
on it and accepted as solved. The last three rows are drawn with rotation and
scale undone. Holed outlines keep one edge joining the hole to the outer
boundary, so that the start state is a single face. Shaded: faces that are not
quadrilaterals.}
\label{fig:certified}
\end{figure}

\section{Par-zero domains are not the easy case}
\label{app:parzero}

Every evaluation domain has par zero (Section~\ref{sec:experiments:setup}),
which might suggest the benchmark never tests the bound on non-zero cases. However, in the course of solving a par zero geometry, the agent routinely encounterss sub-problems with non-zero par. Each chord splits a face in two, and
every face that is not yet a quadrilateral is a domain of its own: a vertex on
its loop with $\deg(v)$ edges, two of them the face's sides, wants
$d(v) - \deg(v) + 2$ edges within it, and Theorem~\ref{cor:par} applied to the
face gives the face's bound. Table~\ref{tab:subproblems} counts the distinct
unfinished faces the released agent creates in its greedy attempt on every
evaluation domain; faces that still contain a hole's slit, whose corners the
count cannot separate, are left out. In distribution, $39\%$ of sub-problems
have a nonzero bound and $75$ of $96$ episodes meet one; on the larger set,
$73\%$ do and every episode meets one.

\begin{table}[H]
\centering
\footnotesize\setlength{\tabcolsep}{4pt}
\caption{Bounds of the sub-problems met in the released agent's greedy attempt
on the par-zero evaluation domains; the last column counts episodes that meet at
least one sub-problem with par above zero.}
\label{tab:subproblems}
\begin{tabular}{lrrrrrrrr}
\toprule
set & sub-problems & par $0$ & $1$ & $2$ & $3$ & $\geq 4$ & par $> 0$ & episodes \\
\midrule
\NinDist{} in-distribution & $2{,}195$ & $60.7\%$ & $33.8\%$ & $4.6\%$ & $0.8\%$ & $0.1\%$ & $39.3\%$ & $75$ of $96$ \\
\Nlarge{} larger & $11{,}717$ & $26.7\%$ & $59.0\%$ & $10.9\%$ & $3.0\%$ & $0.4\%$ & $73.3\%$ & $64$ of $64$ \\
\bottomrule
\end{tabular}
\end{table}

The same holds for whole domains. On $48$ domains with par $1$ to $4$, drawn
from the certified generator at a held-out seed, the agent reaches par on $47$.
No Gmsh configuration at the matched element count reaches par on more than
$8$, and at par $3$ or more only \textsc{packing} reaches it at all, on $4$ of
$20$ (Table~\ref{tab:parset}).

\begin{table}[H]
\centering
\footnotesize
\caption{$48$ held-out domains with par above zero: domains at par, by par.
Ours: best of five attempts, ranked by completion and then closeness to par, at
the default move budget. Gmsh: at the agent's element count where it can reach
it.}
\label{tab:parset}
\begin{tabular}{lrrrrrr}
\toprule
 & par $1$ ($11$) & $2$ ($17$) & $3$ ($17$) & $4$ ($3$) & total & all-quad \\
\midrule
Ours & $\mathbf{11}$ & $\mathbf{16}$ & $\mathbf{17}$ & $\mathbf{3}$ & $\mathbf{47}$ & $\mathbf{48}$ \\
Gmsh quasi-structured & $2$ & $6$ & $0$ & $0$ & $8$ & $\mathbf{48}$ \\
Gmsh packing & $0$ & $3$ & $4$ & $0$ & $7$ & $8$ \\
Gmsh frontal-quad & $1$ & $0$ & $0$ & $0$ & $1$ & $27$ \\
Gmsh blossom & $0$ & $0$ & $0$ & $0$ & $0$ & $30$ \\
Gmsh blossom-full & $0$ & $0$ & $0$ & $0$ & $0$ & $9$ \\
Gmsh simple-recombine & $0$ & $0$ & $0$ & $0$ & $0$ & $2$ \\
Gmsh subdivide & $0$ & $0$ & $0$ & $0$ & $0$ & $\mathbf{48}$ \\
\bottomrule
\end{tabular}
\end{table}

\section{The policy network}
\label{app:policyfig}

\begin{figure}[H]
\centering
\begin{tikzpicture}[scale=1.0]
  \node[nfill, text width=21mm]  (feat) at (0, 0.30) {half-edge features\\$n_T \times 10$};
  \node[nbox,  text width=21mm]  (conn) at (0,-0.85) {\textsf{next}, \textsf{prev}, \textsf{twin}};
  \node[nfill, text width=23mm, minimum height=12mm] (conv) at (2.9,-0.20)
       {DCEL convolution\\\emph{4 residual stages}\\$n_T \times 96$};
  \node[oper] (add) at (7.6,-0.20) {$+$};
  \node[nfill, text width=27mm] (head) at (10.4,-0.20)
       {linear head, $4$ per slot\\$+$ mask $\;\to\; \pi(a \mid s)$};
  \node[lblMute, font=\scriptsize, anchor=north, align=center] at (7.6,-0.52)
       {per-slot linear $+$ LeakyReLU,\\then $+$ context};

  \draw[flow] (feat.east) -- ++(4mm,0) |- (conv.west);
  \draw[flow] (conn.east) -- ++(4mm,0) |- (conv.west);
  \draw[flow] (conv.east) -- (add.west)
        node[midway, above, lblMute, font=\scriptsize] {per-half-edge latents}
        node[midway, below, lblMute, font=\scriptsize] {$n_T \times 96$};
  \draw[flow] (add.east) -- (head.west);

  \node[nfill, text width=18mm] (pool) at (4.7, 1.65) {mean- and\\max-pool};
  \node[oper] (cat) at (6.3, 1.65) {$\mathbin\Vert$};
  \node[nfill, text width=17mm] (ctx) at (7.6, 1.65) {context MLP\\$\to 96$};
  \node[nbox,  text width=27mm] (val) at (10.4, 1.65) {\textbf{critic:} value $V(s)$};

  \draw[flow] (conv.north) |- (pool.west);
  \node[lblMute, font=\scriptsize, anchor=east] at (2.72,1.55) {collapses $n_T$};
  \draw[flow] (pool.east) -- (cat.west);
  \draw[flow] (cat.east)  -- (ctx.west);
  \draw[flow] (ctx.east)  -- (val.west);
  \draw[flow] (ctx.south) -- (add.north)
        node[midway, right, lblMute, font=\scriptsize, xshift=0.5mm, align=left]
        {shared linear map,\\broadcast to every half-edge};

  \node[gbox, text width=21mm] (glob) at (0, 2.85) {global vector (7)\\progress (1)};
  \draw[gflow] (glob.east) -| (cat.north);
  \node[lblAlt, anchor=south, font=\scriptsize\itshape] at (3.5, 2.92)
       {bypasses message passing};
\end{tikzpicture}
\caption{%
The policy. The trunk is per half-edge: convolved features pass through a shared per-slot
linear layer to the action head. Pooling is a \emph{branch} --- it collapses the half-edge axis, and
the global vector and progress scalar, which never enter message passing, join
there. The context vector it produces is one vector per state: the critic's only
input, and, on the actor side, an identical shift added to every half-edge before
a shared linear head. It can therefore re-weight the four action types but cannot
by itself favour one half-edge over another --- selecting \emph{where} to act
rests on the convolution and on the rule that decides which half-edges are in the
window at all.}
\label{fig:policy}
\end{figure}

\section{Hyperparameters}
\label{app:hparams}

{\footnotesize
\begin{longtable}{p{0.30\textwidth}p{0.64\textwidth}}
\caption{The released configuration.}\label{tab:hparams}\\
\toprule
\endfirsthead
\multicolumn{2}{l}{\emph{Table~\ref{tab:hparams}, continued}}\\
\toprule
\endhead
\bottomrule
\endlastfoot
observation window $n_T$ & $64$ half-edges \\
edge-length feature & divided by the window's median \\
global scalars & per face, seven entries \\
actions per half-edge & $4$: chords to $+2$, $+3$, $+4$; vertex insertion \\
default move budget $T_{\max}$ & $\max(12, \lceil 3 |\mathcal{H}_0| \rceil)$ \\
smoothing per move & $5$ Laplacian sweeps, corners pinned \\
potential weights $w_v$, $w_q$, $q^\star$ & $1$, $8$, $0.4$ \\
quality metric & shape, $2(a\times b)/(|a|^2+|b|^2)$ \\
step charge $c$, solve bonus $b$ & $0.1$, $1.0$ \\
solved test & topologically at par and $q \geq 0.1$ after $4$ untangling iterations \\
network & $10 \to 96$ projection, $4$ residual stages of $2$ DCEL blocks, LayerNorm, LeakyReLU \\
context & mean- and max-pool ($192$) $+$ global ($7$) $+$ progress ($1$) $\to$ MLP $96$ \\
parameters & $3.7 \times 10^5$ \\
cloning & $9{,}000$ instances, $187$k pairs, $6$ epochs, uniform target over the optimal set \\
certified anchor in PPO & $35\%$ of episodes, fresh batches of $256$ \\
PPO & $\gamma = \lambda = 1$, clip $0.2$, target KL $0.03$, entropy $5\cdot10^{-3}$, value coef.\ $0.5$ \\
 & $512$ steps per worker, $6$--$8$ workers, minibatch $256$, $4$ epochs, lr $10^{-4}$ linear decay \\
critic warm-up & $40$ epochs, policy frozen \\
budget & $4$M PPO steps from the cloned weights \\
training time & about $4$\,h on a laptop (Apple M2, $8$ cores): instances and cloning ${\approx}\,50$\,min, PPO ${\approx}\,3$\,h at ${\approx}\,355$ steps/s \\
\midrule
evaluation: attempts & $5$ ($1$ greedy $+$ $4$ sampled), best kept \\
evaluation: move budget & doubled, $\max(12, \lceil 6 |\mathcal{H}_0| \rceil)$ \\
evaluation: repair & up to $3$ rounds while $q < 0.3$: insert on the long side if aspect $\geq 2$, else chord across the corner; then the policy finishes \\
\end{longtable}}

\section{All seven Gmsh configurations}
\label{app:gmsh}

\begin{table}[H]
\centering
\footnotesize\setlength{\tabcolsep}{4.5pt}
\caption{Seven Gmsh configurations on the \NinDist{} in-distribution domains,
each asked for the agent's element count where it can produce one; the
elements column gives the mean over its all-quadrilateral meshes and its
multiple of our $15.7$. On packing and simple-recombine nearly every win is
decided by completion.
Bold: best in column; for elements, among methods complete on every domain
(packing and simple-recombine average over $8$ and $2$ meshes). Our win rate: fraction of decided domains on which our
mesh is better on quality and on regularity, under the rule of
Section~\ref{sec:experiments:setup}.}
\label{tab:gmsh}
\begin{tabular}{lrrrrrrrr}
\toprule
 & & & & & & & \multicolumn{2}{c}{our win rate} \\
 & all-quad & usable & excess & at par & elements & $\times$ours & quality & regularity \\
\midrule
Gmsh blossom & $51$ & $38$ & $9$ & $0$ & $19$ & $1.2$ & $90\%$ & $100\%$ \\
Gmsh frontal-quad & $43$ & $29$ & $8$ & $1$ & $16$ & $1.0$ & $90\%$ & $100\%$ \\
Gmsh packing & $8$ & $6$ & $5$ & $0$ & $15$ & $0.9$ & $98\%$ & $100\%$ \\
Gmsh simple-recombine & $2$ & $1$ & $11$ & $0$ & $14$ & $0.9$ & $100\%$ & $100\%$ \\
Gmsh subdivide & $\mathbf{96}$ & $43$ & $49$ & $0$ & $85$ & $5.4$ & $97\%$ & $100\%$ \\
Gmsh quasi-structured & $\mathbf{96}$ & $\mathbf{96}$ & $7.5$ & $2$ & $51$ & $3.2$ & $67\%$ & $100\%$ \\
Gmsh blossom-full & $\mathbf{96}$ & $\mathbf{96}$ & $26$ & $0$ & $220$ & $14.0$ & $37\%$ & $100\%$ \\
\midrule
Ours & $\mathbf{96.0}$ & $95.7$ & $\mathbf{0}$ & $\mathbf{90.2}$ & $\mathbf{16}$ & $1.0$ & & \\
\bottomrule
\end{tabular}
\end{table}

Figure~\ref{fig:distributions} shows the same comparison domain by domain, for
the two matched-count configurations and \textsc{quasi-structured}, on both
sets. In the quality panels a mesh that is not all-quadrilateral is counted in
its own bar, so every method is shown on every domain; the per-mesh minimum
quality is the quantity the usability bar is applied to. The excess over par is only
defined on an all-quadrilateral mesh, so its panels show each method's
all-quadrilateral meshes, as a share of them.

\begin{figure}[H]
\centering
\includegraphics[width=\textwidth]{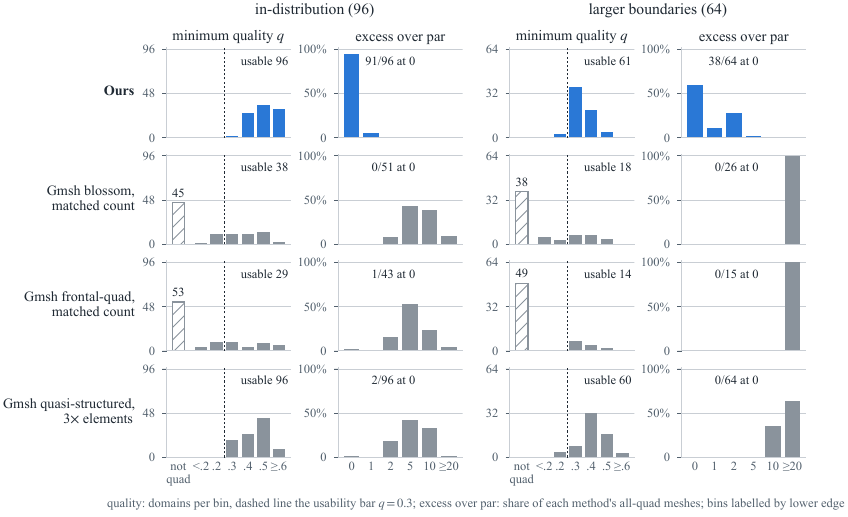}
\caption{Per-mesh minimum quality and excess over par, one mesh
per domain: the agent in one evaluation pass (rollout seed $0$) and Gmsh asked
for that pass's element count on each domain. Quality panels: domains per bin,
hatched bar not all-quadrilateral. Excess panels: share of the method's
all-quadrilateral meshes, with the number at zero quoted. The counts are this
pass's; the tables report the mean over evaluation seeds.}
\label{fig:distributions}
\end{figure}

\begin{table}[H]
\centering
\small
\caption{The in-distribution block of Table~\ref{tab:main} by family: best of five attempts under the procedure,
mean over six evaluation seeds. Columns in the order a mesher is judged:
all-quadrilateral, usable ($q \geq 0.3$), median excess over par,
and at par (the certified optimum). Gmsh \textsc{blossom} is given the agent's
element count on each domain; \textsc{quasi-structured} cannot go that coarse
and runs at its natural size, $3.2\times$ the agent's. Bold: best in column over
all \NinDist{} domains.}
\label{tab:perfamily}
\begin{tabular}{llrrrrr}
\toprule
family & method & all-quad & usable & excess & at par & elements \\
\midrule
chamfered & Ours & $24.0$ & $24.0$ & $0$ & $20.7$ & $14$ \\
 & Gmsh blossom & $14$ & $12$ & $8.5$ & $0$ & $19$ \\
 & Gmsh quasi-structured & $24$ & $24$ & $6.5$ & $0$ & $54$ \\
\addlinespace
rectilinear & Ours & $24.0$ & $24.0$ & $0$ & $23.8$ & $10$ \\
 & Gmsh blossom & $16$ & $10$ & $8$ & $0$ & $15$ \\
 & Gmsh quasi-structured & $24$ & $24$ & $5$ & $1$ & $38$ \\
\addlinespace
chamfered, holes & Ours & $24.0$ & $23.7$ & $0$ & $21.7$ & $18$ \\
 & Gmsh blossom & $14$ & $11$ & $9$ & $0$ & $19$ \\
 & Gmsh quasi-structured & $24$ & $24$ & $9$ & $1$ & $48$ \\
\addlinespace
rectilinear, holes & Ours & $24.0$ & $24.0$ & $0$ & $24.0$ & $21$ \\
 & Gmsh blossom & $7$ & $5$ & $18$ & $0$ & $24$ \\
 & Gmsh quasi-structured & $24$ & $24$ & $12$ & $0$ & $64$ \\
\midrule
all \NinDist & Ours & $\mathbf{96.0}$ & $95.7$ & $\mathbf{0}$ & $\mathbf{90.2}$ & $\mathbf{16}$ \\
 & Gmsh blossom & $51$ & $38$ & $9$ & $0$ & $19$ \\
 & Gmsh quasi-structured & $\mathbf{96}$ & $\mathbf{96}$ & $7.5$ & $2$ & $51$ \\
\bottomrule
\end{tabular}
\end{table}

\section{Per-domain results and evaluation noise}
\label{app:tables}

Every per-domain record behind Tables~\ref{tab:main} and~\ref{tab:ablations} ---
draw index, corner count, par, face defect, excess over par, minimum shape
quality, element and vertex counts, and the Gmsh rows paired by draw --- will be
released with the code as a single archive, together with the two scripts
that regenerate every table and every head-to-head rate from it.
Table~\ref{tab:noise} gives the released agent's spread across evaluation
seeds: six on the in-distribution set and four on the larger one. The domain
seed is fixed, so these deviations measure rollout sampling alone, and a
comparison between two checkpoints on one evaluation seed is paired.

\begin{table}[H]
\centering
\small
\caption{The released agent under the procedure of
Section~\ref{sec:experiments:setup}: mean, standard deviation across
evaluation seeds, and range.}
\label{tab:noise}
\begin{tabular}{llrrr}
\toprule
set & column & mean & sd & range \\
\midrule
\NinDist{} in-distribution, $6$ seeds & all-quad & $96.0$ & $0.0$ & $96$--$96$ \\
 & usable & $95.7$ & $0.5$ & $95$--$96$ \\
 & excess & $0$ & $0.0$ & $0$--$0$ \\
 & at par & $90.2$ & $1.3$ & $88$--$92$ \\
\addlinespace
\Nlarge{} larger, $4$ seeds & all-quad & $64.0$ & $0.0$ & $64$--$64$ \\
 & usable & $62.0$ & $1.0$ & $61$--$63$ \\
 & excess & $0.75$ & $0.43$ & $0$--$1$ \\
 & at par & $31.0$ & $4.1$ & $28$--$38$ \\
\bottomrule
\end{tabular}
\end{table}

\section{The development set}
\label{app:devset}

Model selection during cloning used fifteen hand-made levels --- L, T, U and I
brackets, a staircase, a plus, a triangle and a pentagon, a star, plates with
square and triangular holes, a gear, and three with curved sides --- each with
a known expert solution. They are small (three to ten corners), five of them
are shapes the certified generator can itself produce, and they were never used
for any number in Section~\ref{sec:experiments}. The released agent solves all
fifteen greedily and twelve of the fourteen with a known optimum in the
expert's move count; we report this as a development score only.

\section{Ablations}
\label{app:ablations}
\label{sec:experiments:ablations}
\label{sec:experiments:ablation}

\paragraph{Cloning is necessary for PPO to learn.}
At one million PPO steps and the same seed, the recipe of
Section~\ref{sec:training} is at $91.8$ usable and $76.2$ at par of
$\NinDist$, where the same PPO run from a freshly initialised network completes
$73.5$ but passes the quality bar on $10$ and reaches the bound on $7$, with a
median excess over par of four; on the larger domains nothing it completes
is usable (Table~\ref{tab:ablations}, right). Cloning alone completes $92.8$ and is
usable on $73.5$ but reaches par on only $22.3$; a million PPO steps from it
take that to $76.2$. This is the exploration wall of Section~\ref{sec:training}: without a supply of solved instances the
policy rarely sees the event that the reward targets.

\paragraph{DCEL connectivity has to be an input.}
We train a Transformer encoder with the same parameter count, features,
action space, data, reward and budget, whose input is a breadth-first serialisation of
the template with a positional encoding of slot order. This is a lossy
serialisation of the adjacency. It reaches par on about half as
many domains as the convolution, and already lagged at the end of cloning
(Table~\ref{tab:ablations}, left). We read this as a difference in what the input identifies rather than in capacity, and expect an attention model masked or biased by \texttt{next}, \texttt{previous} and \texttt{twin} to do comparably well.

\paragraph{Run-to-run variance.}
Two training seeds of the released configuration without its quality term, and
two seeds of the whole pipeline (new certified instances, cloning and PPO), differ
in distribution by at most $0.7$ domains on usable and $2.7$ on at par; on the
larger set, by $3$--$5$ on usable and about $10$ on at par. Out of distribution the
seed spread is about five domains, above the evaluation-seed spread of
Table~\ref{tab:noise}, so the larger-boundary numbers should be read as ranges.

\begin{table}[H]
\centering
\footnotesize
\caption{Left: domains at par of the \NinDist{} in-distribution domains, best
of five, at a matched $2$M steps under an earlier configuration of the recipe,
mean of two seeds. Right: PPO from a fresh network, cloning only, and cloning then PPO, at $1$M PPO steps, seed
$1$, best of five at the default move budget and without repair.}
\label{tab:ablations}
\label{tab:ablation-architecture}
\begin{minipage}[t]{0.40\textwidth}
\centering
\begin{tabular}{lrr}
\toprule
extractor & params & at par \\
\midrule
DCEL, $4$ layers & $299$k & $\mathbf{94.5}$ \\
Transf., $4$ layers & $300$k & $51.5$ \\
\bottomrule
\end{tabular}
\end{minipage}\hfill
\begin{minipage}[t]{0.58\textwidth}
\centering
\begin{tabular}{llrrrr}
\toprule
set & training & all-quad & usable & excess & at par \\
\midrule
\NinDist{} & PPO & $73.5$ & $10.2$ & $4$ & $7.2$ \\
 & clone & $92.8$ & $73.5$ & $2$ & $22.3$ \\
 & clone + PPO & $95.5$ & $91.8$ & $0$ & $76.2$ \\
\Nlarge{} & PPO & $13.5$ & $0.0$ & $16.75$ & $0.0$ \\
 & clone & $32.5$ & $0.0$ & $6.6$ & $0.0$ \\
 & clone + PPO & $51.5$ & $14.0$ & $2$ & $16.0$ \\
\bottomrule
\end{tabular}
\end{minipage}
\end{table}

\section{Generating domains, and drawing the mesh}
\label{app:geo2d}

\paragraph{Outlines.}
Training and evaluation domains come from \textsc{geo2d}, a seeded generator of
lattice-based mechanical parts. A shape begins as an occupancy raster on an
integer lattice --- a base rectangle, then rectangular cuts, additions and combs
anchored on the current boundary, each kept only if the material stays connected
and free of pinches and voids --- and tracing that raster gives a
counter-clockwise lattice polygon, to which corner modifiers ($45^\circ$
chamfers for the straight-sided families; fillets and semicircular notches
for the curved ones of Appendix~\ref{app:curved}) and
interior holes with at least one lattice unit of clearance are applied with
exact geometry. One lattice unit is by construction the smallest feature, so a
domain of ratio $R$ fits in an $R \times R$ box and admits a uniform mesh of
unit size, which keeps element counts comparable across a sampled set. A preset
fixes the parameters and a seed determines a domain exactly, so a training
distribution and a held-out evaluation set are specified by a preset and two
seed ranges. The larger-boundary set also caps the ratio of longest to shortest
boundary edge at ten, so a drop there measures size rather than geometric
extremity.

\paragraph{Smoothing.}
The edit operations of Section~\ref{sec:mdp:actions} set connectivity and say
nothing about where nodes sit, so every geometric property of the result is the
smoother's doing. Between moves we run a few Laplacian sweeps, which are cheap
and keep the observed angles meaningful. Where quality decides something --- the
solved test, and every reported number --- the mesh is instead untangled: for
each free node a condition-number objective over its adjacent corners is
minimised by Newton steps, with the corner Jacobian entering through
$h(J) = (J + \sqrt{J^2 + \delta^2})/2$ so that an inverted corner is pushed back
out rather than sending the objective to infinity
\citep{escobar2003untangling}. The corners of $\Omega$ never move; nodes the
agent inserted on a boundary edge slide along it toward the same objective, and
a slide is accepted only if the worst adjacent corner does not get worse
\citep{freitag1997smoothing}. The same smoother, with the same settings, is
applied to every method before it is scored.

\section{Curved boundaries}
\label{app:curved}

\paragraph{The bound on curved domains.}
Section~\ref{sec:problem} states the bound for polygons, but its proof uses no
straightness. A vertex inside an arc has $\theta_v = \pi$ by tangent
continuity, so the only geometry entering \eqref{eq:par} is again the set of
corner angles: a smooth arc contributes to par exactly as a straight edge does.
A square and a disc are both topological discs, but the square's four corners
give $\mathrm{par} = 0$ while the disc, with none, has $\mathrm{par} = 4$.

\paragraph{The repair search.}
Section~\ref{sec:experiments:curved} runs the released agent, trained only on
straight-sided domains, with no change to its weights, observation or action
space. The domains are the four curved suites, with fillets and semicircular
notches: $48$ each at $8$--$24$ and $25$--$50$ corners, with and without holes
($16$ for the holed larger suite), every arc discretised at $45^\circ$. The
agent's five attempts of Section~\ref{sec:experiments:setup}, at the doubled
move budget, supply a pool of all-quadrilateral states, each untangled with the
arcs' tangents. The search then repeats: take the most regular states still
below the quality bar; at the worst element of each apply one edit, either
refining the quadrilateral sheet through it (all-quadrilateral in and out, every
new vertex at its want, so the excess over par is unchanged), or inserting a
vertex on one of its sides or a chord across it and letting the agent finish
the mesh; untangle the results and add them to the pool. It stops after
$600$\,s. The answer is the mesh with the lowest excess over par among those
that clear the bar, ties going to fewer elements, so that refinement is kept
only when it buys the bar. The row without the search in
Table~\ref{tab:curvedmain} selects from the same pool by quality first, as
Section~\ref{sec:experiments:setup} does. The search judges quality with each
corner on an arc measured against the arc's tangent; this is consistent with block decomposition wherein refinement will add new vertices on the arc and not on the chord. However, Gmsh often fails the quality bar when measured on tangents, therefore we report quality measured by the tangent as well as the chord for fairness.

\begin{table}[H]
\centering
\footnotesize\setlength{\tabcolsep}{2.2pt}
\caption{Table~\ref{tab:curvedmain} by suite, with the larger domains; every suite
is curved. Columns, rows and win rates as there; our regularity win rate is
$100\%$ against every configuration in every suite and is omitted. Bold: best in
column within a suite. The holed larger suite has $16$ domains, the others $48$.}
\label{tab:curved}
\begin{tabular}{llrrrrrrrrr}
\toprule
 & & & & & & & \multicolumn{2}{c}{usable} & \multicolumn{2}{c}{quality win} \\
\cmidrule(lr){8-9} \cmidrule(lr){10-11}
suite & method & all-quad & excess & at par & elem. & $\times$ours & tangent & chord & tangent & chord \\
\midrule
$8$--$24$ & Gmsh blossom & $35$ & $18$ & $0$ & $60$ & $1.1$ & $3$ & $31$ & $94\%$ & $50\%$ \\
 & Gmsh frontal-quad & $30$ & $18$ & $0$ & $58$ & $1.1$ & $2$ & $23$ & $100\%$ & $77\%$ \\
 & Gmsh quasi-structured & $\mathbf{48}$ & $9$ & $0$ & $81$ & $1.6$ & $3$ & $33$ & $94\%$ & $71\%$ \\
 & Gmsh blossom-full & $\mathbf{48}$ & $31$ & $0$ & $249$ & $4.8$ & $13$ & $44$ & $73\%$ & $46\%$ \\
 & Ours & $\mathbf{48}$ & $\mathbf{2}$ & $\mathbf{21}$ & $\mathbf{52}$ & $1.0$ & $\mathbf{47}$ & $\mathbf{45}$ &  &  \\
 & \quad no search & $48$ & $2$ & $11$ & $51$ &  & $39$ & $42$ &  &  \\
\addlinespace
holes, $8$--$24$ & Gmsh blossom & $27$ & $24$ & $0$ & $61$ & $1.1$ & $4$ & $23$ & $94\%$ & $58\%$ \\
 & Gmsh frontal-quad & $25$ & $22$ & $0$ & $59$ & $1.1$ & $0$ & $19$ & $100\%$ & $79\%$ \\
 & Gmsh quasi-structured & $\mathbf{48}$ & $14$ & $0$ & $84$ & $1.5$ & $3$ & $31$ & $96\%$ & $62\%$ \\
 & Gmsh blossom-full & $45$ & $32$ & $0$ & $221$ & $4.0$ & $12$ & $35$ & $75\%$ & $48\%$ \\
 & Ours & $\mathbf{48}$ & $\mathbf{2}$ & $\mathbf{16}$ & $\mathbf{55}$ & $1.0$ & $\mathbf{48}$ & $\mathbf{43}$ &  &  \\
 & \quad no search & $48$ & $4$ & $8$ & $54$ &  & $35$ & $38$ &  &  \\
\addlinespace
$25$--$50$ & Gmsh blossom & $28$ & $64$ & $0$ & $185$ & $1.1$ & $2$ & $23$ & $96\%$ & $54\%$ \\
 & Gmsh frontal-quad & $24$ & $52$ & $0$ & $191$ & $1.2$ & $0$ & $19$ & $100\%$ & $60\%$ \\
 & Gmsh quasi-structured & $\mathbf{48}$ & $25$ & $0$ & $323$ & $2.0$ & $0$ & $33$ & $100\%$ & $31\%$ \\
 & Gmsh blossom-full & $\mathbf{48}$ & $98$ & $0$ & $1064$ & $6.5$ & $10$ & $\mathbf{41}$ & $79\%$ & $15\%$ \\
 & Ours & $\mathbf{48}$ & $\mathbf{8}$ & $\mathbf{1}$ & $\mathbf{164}$ & $1.0$ & $\mathbf{36}$ & $30$ &  &  \\
 & \quad no search & $48$ & $8$ & $0$ & $138$ &  & $12$ & $13$ &  &  \\
\addlinespace
holes, $25$--$50$ & Gmsh blossom & $5$ & $60$ & $0$ & $201$ & $1.1$ & $1$ & $5$ & $100\%$ & $81\%$ \\
 & Gmsh frontal-quad & $10$ & $58$ & $0$ & $179$ & $1.0$ & $0$ & $9$ & $100\%$ & $56\%$ \\
 & Gmsh quasi-structured & $\mathbf{16}$ & $32$ & $0$ & $295$ & $1.7$ & $0$ & $10$ & $94\%$ & $25\%$ \\
 & Gmsh blossom-full & $\mathbf{16}$ & $117$ & $0$ & $960$ & $5.4$ & $\mathbf{4}$ & $\mathbf{14}$ & $75\%$ & $19\%$ \\
 & Ours & $\mathbf{16}$ & $\mathbf{13}$ & $\mathbf{1}$ & $\mathbf{178}$ & $1.0$ & $\mathbf{4}$ & $6$ &  &  \\
 & \quad no search & $16$ & $12$ & $1$ & $153$ &  & $2$ & $2$ &  &  \\
\bottomrule
\end{tabular}
\end{table}

On the larger domains the regularity result holds, a median excess over par of
$8$ and $13$ against $25$ to $117$, but the agent clears the quality bar less often on larger domains. One domain per suite is drawn in the gallery (Figure~\ref{fig:gallery-curved}).

\section{Gallery}
\label{app:gallery}

Figures~\ref{fig:gallery-indist} and~\ref{fig:gallery-large} show one domain from
each family of the two evaluation sets. We show domains whose agent result sits closest to the family's medians of corner count, element count and minimum quality. Gmsh \textsc{blossom} is asked for the agent's element count on that domain and returns the closest it can reach; \textsc{quasi-structured} runs at its natural size; both are untangled with the same smoother as the agent's mesh. A marked vertex is one whose degree differs from every degree its position wants (Section~\ref{sec:problem}).

\begin{figure}[H]
\centering
\includegraphics[width=\textwidth]{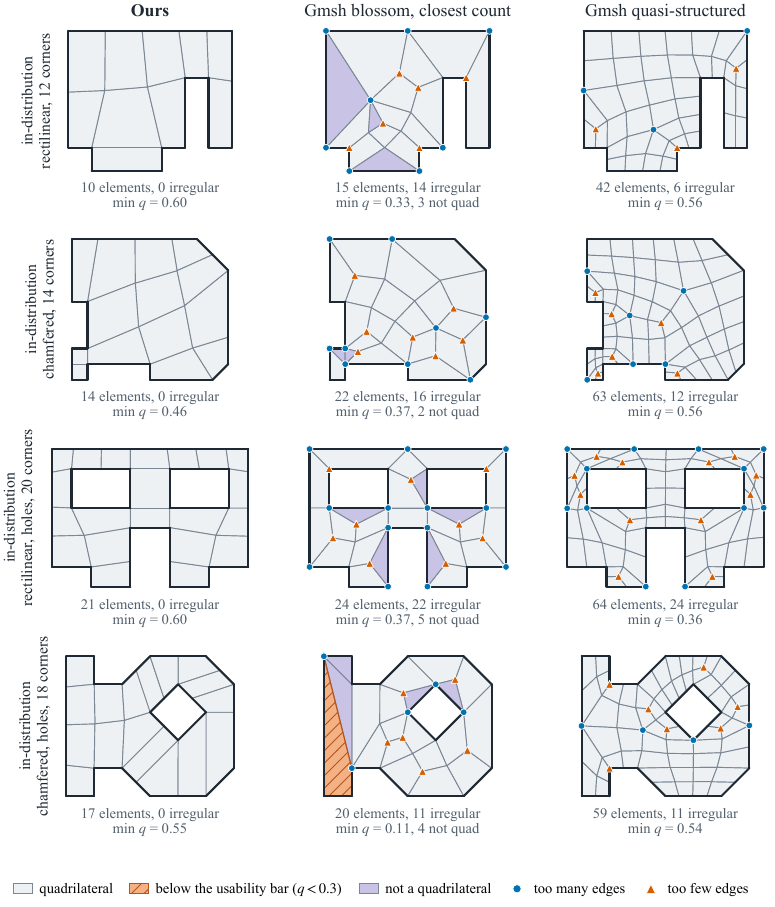}
\caption{One domain per in-distribution family ($8$--$24$ corners).}
\label{fig:gallery-indist}
\end{figure}

\begin{figure}[H]
\centering
\includegraphics[width=\textwidth]{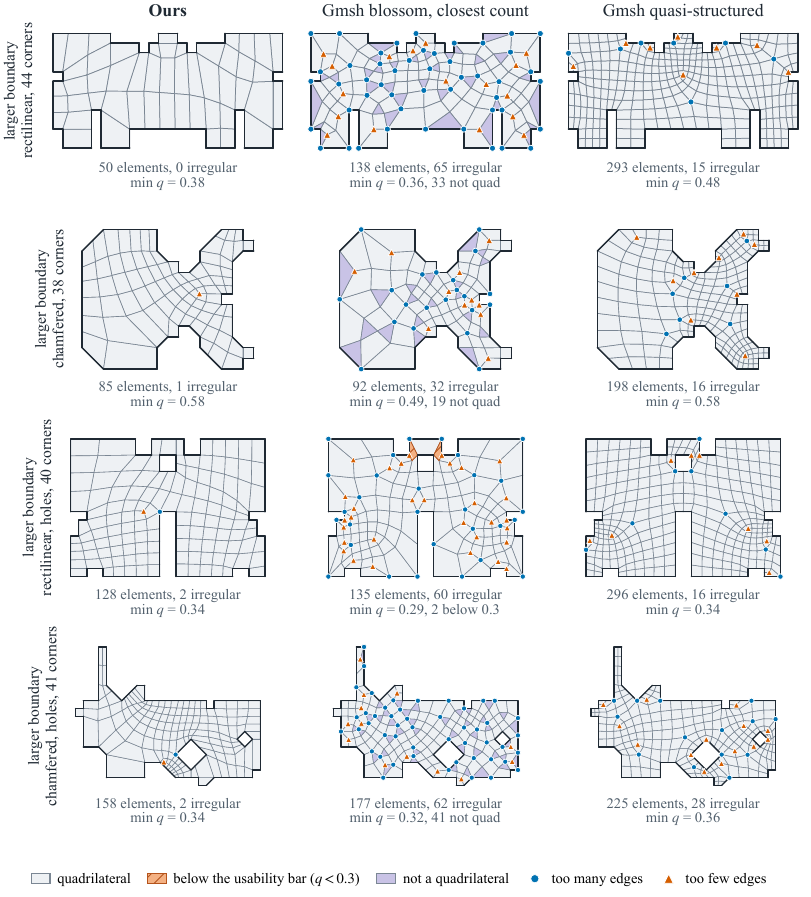}
\caption{Larger-boundary domains ($25$--$50$ corners), beyond any seen in
training: one per family.}
\label{fig:gallery-large}
\end{figure}

Figure~\ref{fig:gallery-curved} shows one domain per curved suite of
Section~\ref{sec:experiments:curved}, meshed by the released agent with the repair
search, chosen by the same rule among domains with at least one arc.

\begin{figure}[H]
\centering
\includegraphics[width=\textwidth]{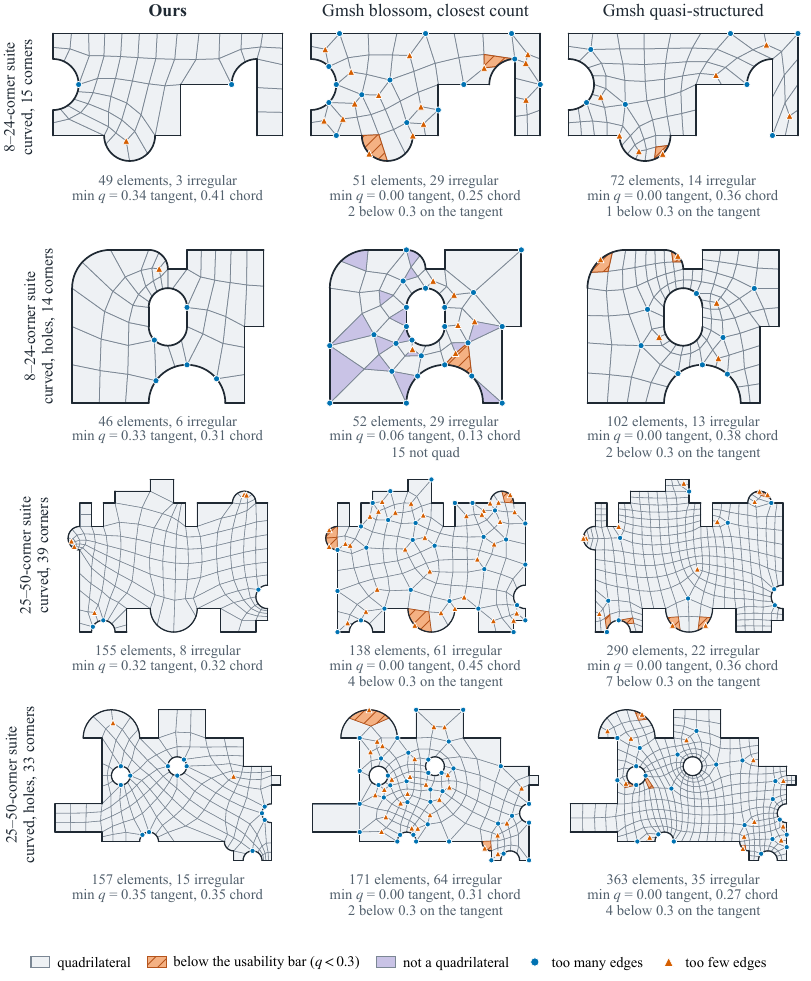}
\caption{Curved domains, one per suite, meshed by the released agent, trained
only on straight-sided domains, with the repair search, and by Gmsh as in
Figure~\ref{fig:teaser}. A boundary edge that lies on an arc is drawn along the
arc; ours has a boundary node every $45^\circ$ of arc. Each panel gives the minimum
element quality judged against the arc tangents and on the straight sides
(Section~\ref{sec:experiments:curved}); the hatching uses the tangent.}
\label{fig:gallery-curved}
\end{figure}

\end{document}